\newtheorem{Theorem}{Theorem}[section]
\newtheorem{lemma}{Lemma}[section]
\newtheorem{Assumption}{Assumption}[section]
\newtheorem{Definition}{Definition}[section]
\theoremstyle{remark}
\newtheorem{Remark}{Remark}[section]
\newcommand{\e}{\mathbf{e}}
\newcommand{\x}{\mathbf{x}}
\newcommand{\y}{\mathbf{y}}
\newcommand{\z}{\mathbf{z}}
\newcommand{\bv}{\mathbf{v}}
\newcommand{\reals}{\mathbb{R}}
\newcommand{\hyf}{\hat{\y}_f}
\newcommand{\bP}{\mathbf{P}}
\newcommand{\bE}{\mathbf{E}}
\newcommand{\F}{\mathbf{F}}
\newtheorem{notation}{Notation}
\def\BibTeX{{\rm B\kern-.05em{\sc i\kern-.025em b}\kern-.08em
    T\kern-.1667em\lower.7ex\hbox{E}\kern-.125emX}}
\begin{document}

\title{PAC-Bayesian theory for stochastic LTI systems}
\author{Deividas Eringis, John Leth and Zheng-Hua Tan, Rafal Wisniewski\\ Alireza Fakhrizadeh Esfahani, Mihaly Petreczky
\thanks{Deividas Eringis, Johns Leth,  Zheng-Hua Tan and Rafal Wisnieswski are with Dept. of Electronic Systems, Aalborg University, Denmark, \{der,jjl,zt,raf\}@es.aau.dk.}
\thanks{Mihaly Petreczky and Alireza Fakhrizadeh Esfahani is with Laboratoire Signal et Automatique de Lille (CRIStAL) Lille, France, mihaly.petreczky@centralelille.fr, alireza.fakhrizadeh.esfahani@gmail.com}
}
%\and
%\IEEEauthorblockN{2\textsuperscript{nd} John Leth}
%\IEEEauthorblockA{\textit{Automation and Control} \\
%\textit{Aalborg University}\\
%Aalborg,Denmark \\
%jjl@es.aau.dk}
%\and
%\IEEEauthorblockN{3\textsuperscript{rd} Zheng-Hua Tan}
%\IEEEauthorblockA{\textit{Artificial Intelligence and Sound} %\\
%\textit{Aalborg University}\\
%Aalborg,Denmark\\
%zt@es.aau.dk}
%\and
%\IEEEauthorblockN{4\textsuperscript{th} Rafal Wisniewski}
%\IEEEauthorblockA{\textit{Automation and Control} \\
%\textit{Aalborg University}\\
%Aalborg,Denmark \\
%raf@es.aau.dk}
%\and
%\IEEEauthorblockN{5\textsuperscript{th} Mihaly Petreczky}
%\IEEEauthorblockA{\textit{Centre de Recherche en Informatique} %\\
%\textit{Signal et Automatique de Lille (CRIStAL)}\\
%Lille, France \\
%mihaly.petreczky@centralelille.fr}

%}

\maketitle

%\begin{abstract}
%This document is a model and instructions for \LaTeX.
%This and the IEEEtran.cls file define the components of your paper [title, text, heads, etc.]. *CRITICAL: Do Not Use Symbols, Special Characters, Footnotes, 
%or Math in Paper Title or Abstract.
% The aim is:
% \begin{enumerate}[I.)]
%     \item find an upper bound for the right hand side of Eq. \ref{T:pac}, with loss defined in Eq. \ref{eq:LTIloss},
%     \item show that as $n\rightarrow\infty$ then $\hat{\mathcal{L}}^\ell_{X,Y}(\theta)\rightarrow \mathcal{L}^\ell_{\mathcal{D}}(\theta)$ \label{enum:conv} (if possible)
%     \item say something about rate of convergence of \ref{enum:conv}
%     \item show how to sample from gibbs distribution Eq. \ref{eq:gibbs}
% \end{enumerate}
%\end{abstract}

\begin{abstract}
In this paper we derive a PAC-Bayesian error bound for autonomous stochastic LTI state-space models. The motivation for deriving such error bounds is that they will allow deriving similar error bounds for more general dynamical systems, including recurrent neural networks. In turn, PAC-Bayesian error bounds are known to be useful for analyzing machine learning algorithms and for deriving new ones. 
\end{abstract}
\section{Introduction}
The goal of this paper to present a PAC-Bayesian error bound for learning autonomous stochastic linear time-invariant (LTI)
state-space representations (\emph{LTI systems} for short). Autonomous stochastic LTI systems are widely used to model time series, they correspond to ARMA models. They represent one of the simplest classes of dynamical systems, and their learning theory has a rich history. In particular, the learning of
stochastic LTI systems has been the subject of system identification \cite{LjungBook}.  Despite the large body of literature for learning LTI models, there has been no result on PAC-Bayesian error bounds for LTI models.

In order to explain what PAC-Bayesian framework is, it is useful to recall from \cite{LjungBook} the learning problem for LTI systems.
%Learning LTI systems is a classical subject, with a rich literature. In the control community it is viewed as a part of system identification \cite{LjungBook}. 
%We will use the terminology of machine learning, but we will indicate the correspondence with the terminology of system identification whenever it is appropriate.
Recall from \cite{LjungBook} that stochastic LTI models can be viewed bot as generators of an output process (its distribution), and 
%\item as parametrizations of probability distributions of the output process
as predictors which predict the current value of an output process based on its past values.
These two views are equivalent. 
%\end{itemize}
In this paper we will view dynamical systems as predictors, which use past outputs to predict future outputs. 
%For LTI systems the three viewpoints discussed above are equivalent, we refer to the standard literature \cite{LjungBook,Katayama:05,LindquistBook} for more details.
Informally, following  \cite{LjungBook} the learning problem for autonomus stochastic LTI models can be formulated as follows. Consider a process $\y$ which is generated by a black-box system. The goal is to find an LTI model which  uses the past of the observed process $\y$ to predict the current value of $\y$, and such that the prediction error is the smallest possible. The prediction error is measured by using a so called loss function. 
There are two types of prediction error to consider:
the \emph{generalization error} and the \emph{empirical loss}
The generalization error is mathematical expectation of the difference between the current values of $\y$ and the one predicted by the model, if the model can use all the past observations of $\y$ to predict future ones.
The generalization error is a number which tells us how well the model performs in average. In contrast, the average empirical error is a random variable, each realization of which corresponds to the actual prediction error for a finite number of observations sampled from $\y$.

As a rule, algorithms for learning LTI models \cite{LjungBook} chose the model parameters in such a manner that the empirical error is small, when evaluated for the data available for learning. 
%In system identification, this is described in \cite{LjungBook}.
However, the fact that a certain model renders the empirical error small does not necessarily mean that the generalization error will be small too. In fact, one of  the main theoretical challenge is to prove this implication for certain classes of
learning problems and algorithms.

Informally, the immediate goal of PAC-Bayesian framework \cite{mcallester-99,mcallester-03b,guedj2019primer}
is to derive an upper 
bound on the average generalization error, which involves the 
average empirical error of models and a number of tuning parameters. 
The averaging is done on the space of all possible models, using a 
probability distribution on the space of models.
On of the tuning parameters is the prior distribution on the space of inputs, which represents our hypothesis on the model structure; 
We will refer to this
inequality as the \emph{PAC-Bayesian inequality}. 
Once a PAC-Bayesian inequality has be derived, learning algorithms can be formulate as follows.
\begin{enumerate}
\item Choose the value of the prior distribution and other tuning parameters.
\item Find probability
      distribution which minimizes the PAC-Bayesian upper bound for the
      average generalization error.  Since this upper bound involves
      the empirical loss, we will tend to assign higher
      probability to models which fit well the training data. However, this
      will be tempered by the need to take into account the prior distribution
      and the tuning parameters. 
      This step can be performed using sampled data.
      The thus obtained probability distribution is analogous to 
      the posterior distribution in the Bayesian setting.
%\item If more data is available, the same procedure could be repeated using
%      new data, with the posterior distribution used as a prior.
\item The final model can be obtained by either sampling randomly from the
      posterior distribution, choosing the element with the maximal density (if the posterior distribution is absolutely continuous), or by taking the mean of the posterior distribution. 
\end{enumerate}
Various choices of the prior distribution on the space of model and of the tuning parameters which enter the PAC-Bayesian inequality will lead to different
learning algorithms. In fact, it can be
shown \cite{guedj2019primer,nips-16} that known learning algorithms can be derived from PAC-Bayesian inequality. The thus derived algorithms will automatically be equipped with
an upper bound on the generalization error. In turn, we can use this upper
bound for a systematic analysis of corresponding learning algorithm.
%the influence of the prior and the tuning parameter on the generalization error of the model produced by the learning
%algorithm. 

The PAC-Bayesian learning theory \cite{mcallester-99,mcallester-03b}) is one of
the several approaches for studying theoretical properties of machine learning algorithms. This approach is subject of intensive research, see \cite{guedj2019primer} for a recent survey. It has demonstrated its ability to provide computable generalization bounds on popular machine learning algorithms, such as neural networks \cite{Dziugaite2017} and SVMs \cite{ambroladze-06}. Moreover, as its name suggests, PAC-Bayesian framework combines   the classical Probably Approximately Correct theory based on a frequentist view with Bayesian inference, see \cite{zhang-06,grunwald-2012,alquier-15,nips-16,ShethK17} for a more detailed discussion on the topic.
%In turn, such  an e
%bound can lead to new learning algorithms and to a better understanding of the
%influence of the learning algorithm on the generalization error.

The motivation for studying PAC-Bayesian error bounds for LTI models is as follows.  Autonomous stochastic LTI models are a special case of stochastic LTI models with inputs. In fact, it is known \cite{LjungBook,LindquistBook,Katayama:05} that learning stochastic LTI models with inputs  can be decomposed into learning autonomous stochastic LTI models and learning deterministic LTI models with inputs. Hence, in order to derive PAC-Bayesian error bounds for stochastic LTI models with inputs the first step is to derive such error bounds for autonomous stochastic LTI models. In turn, stochastic LTI models with inputs are special cases of recurrent neural networks (RNN) and their various modifications, such as LSTM networks. Hence,
deriving PAC-Bayesian error bounds for autonomous stochastic LTI models is the first step towards deriving PAC-Bayesian 
error bounds for stochastic LTI models with inputs, and then for RNNs and LSTMs. In turn, this will allow to understand the theoretical limitations of learning algorithms for RNNs. 

Despite an impressive body of literature, PAC-Bayesian error bounds are not available for learning LTI systems. 
One of the reasons for this is that most of the existing literature dealt with bounded and Lipschitz loss functions
\cite{shalev2014understanding,Dziugaite2017,alquier2013prediction}, which is typical for classification problem, but not for regression problems. In \cite{nips-16} PAC-Bayesian bounds for linear regression with a quadratic loss function was developed, later this bound was improved and extended to non i.i.d. data in 
\cite{shalaeva2019improved}.  In particular, in \cite{shalaeva2019improved} the derived PAC-Bayesian error bound for non i.i.d linear regression problem was applied to learning ARX models.  Note that in \cite{shalaeva2019improved} control inputs were allowed, while in the current paper we consider only autonomous systems. 
Since ARX models without inputs correspond to AR models, and 
the latter class represents a special case of autonomous stochastic LTI  state-space representations, the present paper can be viewed as an extension of \cite{shalaeva2019improved} to stochastic LTI state-space representations. 

%For the classification category of ML problems, there has been some advancements . In \cite{shalev2014understanding} mathematical foundation and various methods for classification using PAC theory, have been described. Where PAC bounds have been described for binary classification in terms of the number of training data points and the probability of PAC bounds holding. In \cite{dziugaite2017computing}, the authors use PAC-bayes bound for learning deep neural network for binary classification, and show that minimizing PAC-bayesian bounds allowed to train the network with relative small amount of data compared to the size of network.\\
%Current research onto mathematical foundation for regression problems is sparse. In \cite{alquier2013prediction}, authors establish rates of converges of generalization loss (w.r.t. number of training data samples and model complexity) for time series forecasting using general and linear autoregressive predictors. 

%In \cite{hardt2018gradient} authors proved that stochastic gradient descent efficiently converges to the global optimizer of the maximum likelihood objective of an unknown linear time-invariant dynamical system from a sequence of noisy observations generated by the system.

The structure of the paper is as follows. 
In Section \ref{sect:learning:gen} we present the formal framework for PAC-Bayesian learning of linear dynamical systems. In Section \ref{lti:bayesian} we present the main result of the paper which is an analytic PAC-Bayesian error bound for stochastic LTI models. In Section \ref{sect:num} we present a numerical example for illustrating the theoretical results. 
In Section \ref{sect:concl} we present the conclusions and directions for future research.

\section{PAC-bayesian learning for linear dynamical systems}
\label{sect:pac:learning:gen}
The purpose of this section is to formulate the learning problem for stochastic dynamical systems and to present a brief overview of PAC-Bayesian framework.

\subsection{The learning problem}
\label{sect:learning:gen}

In order to formalize the problem, let
us denote by $\mathcal{Y}=\mathbb{R}^{p}$ the set of output values. Moreover, we fix a probability space $(\Omega,\bP,\F)$, where $\F$ is a $\sigma$-algebra on $\Omega$ and $\bP$ is a probability measure on $\F$, see for example \cite{KlenkeAchim2020PTAC} for the terminology. We use $\bE$ to denote the
 mathematical expectation with respect to the probability measure $\bP$.
Consider a stochastic process $\y$ on this probability space, taking values in $\mathcal{Y}$, and with time axis $\mathbb{Z}$. That is, for
any $t \in \mathbb{Z}$, $\y(t):\Omega \rightarrow \mathcal{Y};~\omega\mapsto \y(t)(\omega)=y$ is a random
variable on $(\Omega,\bP,\F)$. For technical reasons,
we assume that $\y$ is 
\begin{itemize}
    \item Stationary: its finite dimensional distributions are invariant under time displacement
    \begin{multline*}
        \bP(\y(t_1+t)\leq y_1,\dots,\y(t_k+t)\leq y_k)\\
        =\bP(\y(t_1)\leq y_1,\dots,\y(t_k)\leq y_k)
    \end{multline*}
    for any $t,t_1,\dots,t_k$ and $k$.
    \item Square-integrable: $\bE[||\y(t)||^2_2]<\infty$ for any $t$, with $||\cdot||_2$ denoting the Euclidean 2-norm.
    \item Zero mean: $\bE[\y(t)]=0$ for any $t$. 
\end{itemize}
Recall that the first two assumptions imply constant expectation and that the covariance matrix $Cov(\y(t),\y(s))=\bE[(\y(t)-\bE[\y(t)])(\y(s)-\bE[\y(s)])^T]$ only depend on $t-s$.  

The goal of learning is to estimate \emph{models} for predicting $\y$
base on a training set $\{\y(t)(\omega)\}_{t=0}^{N}$ formed by a finite portion of a sample path of $\y$.

 Let $\mathcal{Y}^{*}=\bigcup_{k=1}^{\infty} \mathcal{Y}^k$ where here $\bigcup$ denote disjoint union. By slight abuse of notation we simply write $y=(y_1,\ldots,y_k)$ in place of the more correct $(y,k)=((y_1,\ldots,y_k),k)$ for an element in $\mathcal{Y}^{*}$. Moreover, whenever necessary we associate with $\mathcal{Y}^k$ the topology generated by $||\cdot||_2$, and the Borel $\sigma$-algebra $\mathcal{B}_k$ generated by the open sets of $\mathcal{Y}^k$.
 
In our setting, models will be a subset of measurable functions of the form
\begin{equation}
\label{model:eq1}
 f: \mathcal{Y}^{*} \rightarrow \mathcal{Y}
\end{equation}
In particular, $f^{-1}(B)\in \mathcal{B}_k$ for all $k$ whenever $B\in \mathcal{B}_1$. For the easy of notation, for any $(y_1,\ldots,y_k) \in \mathcal{Y}^k$,
we denote $f((y_1,\ldots,y_k))$ by $f(y_1,\ldots,y_k)$.

In this paper, we will restrict attention to models which arise from stable transfer functions.  More precisely, we will use the following definition.
\begin{Definition}[Model]
\label{model:def}
 A model is a function $f$ of the form \eqref{model:eq1} such that there 
 exists a sequence of $p \times p$ matrices $\{M_k\}_{k=1}^{\infty}$
 such that the sequence is absolutely summable, i.e.,
 $\sum_{k=0}^{\infty} \|M_k\|_2 < \infty$, and 
 \begin{equation}
 \label{model:def:eq1}
 f(y_1,\ldots,y_k)=\sum_{i=1}^{k} M_i y_i 
 \end{equation}
\end{Definition}
Intuitively, a model $f$ satisfying the conditions of 
Definition \ref{model:def} are input-output maps of linear systems defined by
a stable transfer function $G(z)=\sum_{k=1}^{\infty} M_k z^{-(k-1)}$. 

Intuitively, if $f$ is applied to some tuple of past value of $\y$, then the
resulting random process will be viewed as a prediction of the current value of $\y$ based on those past values, that is, 
\begin{notation}
In the sequel we use the following notation: $$\hyf(t\mid s) = f(\y(t-1),\ldots,\y(s)).$$
\end{notation}
The random variable 
$\hyf(t\mid s)$ is interpreted as the prediction of $\y(t)$
generated by $f$, based on the past values $\y(t-1),\ldots, \y(s)$, $s < t$.
Clearly,
\begin{equation}
\label{model:def:predict}
\hyf(t \mid s)=\sum_{k=1}^{t-s} M_k\y(t-k)
\end{equation}
\begin{Remark}%[Stationarity of $\hyf(t \mid s)$]
Notice that since $\y$ is stationary, it follows that the probability distribution of 
$\hyf(t \mid s)$  depends only on $t-s$
\end{Remark}

Naturally, we would like to find models which predict the output process well. The “quality” of the predictor $f$ is usually assessed through a measurable \emph{loss function} $\ell : \mathcal{Y}\times \mathcal{Y} \rightarrow [0,\infty)$.
The loss function evaluated at
$\ell(\hyf(t\mid s),\y(t))$ measures how well the predicted value
$\hyf(t\mid s)$ approximates the true value of $\y$
at $t$: the smaller $\ell(\hyf(t\mid s),\y(t))$ is, the smaller the prediction error is. Note that the use of the word smaller here needs to be quantified e.g., by $\bE$, as $\ell(\hyf(t\mid s),\y(t))$ is a random variable. Indeed, in the sequel we will refer to $\bE[l(\hyf(t\mid s),y(t))]$ as the prediction error   
\begin{Assumption}[Quadratic loss function]
In the sequel, we will assume that the loss function is quadratic, i.e.
$\ell(y,y^{'})=\|y-y^{'}\|_2^2=(y-y^{'})^T(y-y^{'})$.
\end{Assumption}
Some of the subsequent discussion, especially Theorem~\ref{thm:general-alquier} can be extended to other loss functions, but for the purposes of the paper
quadratic loss functions are sufficient. 

Then the quantity 
\begin{equation}
\label{model:def:finpred}
\bE[\ell(\hyf(t\mid s),\y(t))], 
\end{equation}
measures the mean difference between the actual process $\y(t)$ and 
the predicted value $\hyf(t \mid s)$ bases on $\{\y(\tau)\}_{\tau=s}^{t-1}$.
However, the expectation \eqref{model:def:finpred} is not the most
convenient  measure of the predictive power of a model, as it depends
on the prediction horizon $t-s$. In practice, the prediction
horizon tends to increase with the increase of the number of available
data points. For this reason, it is more convenient to consider the
prediction error as the beginning of the prediction horizon goes to infinity.

Intuitively, we expect that the quality of the prediction will increase with the growth of the horizon  $t-s$ used for prediction. 
In fact, we can state the following.
\begin{lemma}[Infinite horizon prediction, \cite{hannan}]\label{l:ihp}
The limit 
\begin{equation}
\label{model:inf:eq-1}
    \hyf(t)=\lim_{s \rightarrow -\infty} \hyf(t \mid s)
\end{equation}
exists
in the mean square sense for all $t$, the process $\hyf(t)$ is stationary,
\begin{equation}
     \label{model:inf:eq}
      \bE[\ell(\hyf(t),\y(t))]=\lim_{s \rightarrow -\infty} \bE[\ell(\hyf(t \mid s),\y(t))] 
\end{equation} 
and $\bE[\ell(\hyf(t),\y(t))]$ does not depend on $t$.
\end{lemma}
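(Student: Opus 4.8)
The plan is to recast the statement as a convergence result in the Hilbert space $H$ of square-integrable $\reals^p$-valued random variables on $(\Omega,\bP,\F)$, and then to read off the claims about stationarity, about the limit \eqref{model:inf:eq}, and about $t$-independence from standard continuity and closedness properties. First I would prove that the partial sums $S_n(t) := \sum_{k=1}^{n} M_k\,\y(t-k)$ form a Cauchy sequence in $H$. Using the triangle inequality in $H$ together with the elementary bound $\|M_k\,\y(t-k)\|_{H}^2 = \bE[\|M_k\y(t-k)\|_2^2] \le \|M_k\|_2^2\,\bE[\|\y(0)\|_2^2]$, where stationarity of $\y$ makes the last factor independent of $t$ and $k$, one obtains for $m<n$
\[
\big\|S_n(t) - S_m(t)\big\|_{H} \;\le\; \Big(\sum_{k=m+1}^{n} \|M_k\|_2\Big)\sqrt{\bE[\|\y(0)\|_2^2]}.
\]
Absolute summability of $\{M_k\}_{k\ge 1}$ drives the right-hand side to $0$ as $m\to\infty$, so $\{S_n(t)\}_n$ is Cauchy and, by completeness of $H$, converges in mean square. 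Since $\hyf(t\mid s) = S_{t-s}(t)$, this limit is exactly $\hyf(t) = \lim_{s\to-\infty}\hyf(t\mid s)$, which gives \eqref{model:inf:eq-1}.

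Next I would establish stationarity. For a fixed horizon $n$, the map $t \mapsto S_n(t)$ is one fixed measurable function applied to the finite window $(\y(t-n),\dots,\y(t-1))$; hence stationarity of $\y$ implies that, for any times $t_1,\dots,t_m$ and any shift $\tau$, the tuple $(S_n(t_1+\tau),\dots,S_n(t_m+\tau))$ has the same law as $(S_n(t_1),\dots,S_n(t_m))$. Coordinatewise mean-square convergence yields mean-square convergence of these tuples as $n\to\infty$, hence convergence in law, and equality in law is preserved under weak limits; therefore $(\hyf(t_1+\tau),\dots,\hyf(t_m+\tau))$ and $(\hyf(t_1),\dots,\hyf(t_m))$ have the same law, i.e.\ $\hyf$ is stationary. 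Running the same argument on the enlarged window that also contains $\y(t)$ shows that the pair process $\big(\hyf(t),\y(t)\big)_{t\in\mathbb{Z}}$ is jointly stationary.

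For \eqref{model:inf:eq}, note that $\hyf(t\mid s)\to\hyf(t)$ in $H$ as $s\to-\infty$ while $\y(t)$ is a fixed element of $H$, so $\hyf(t\mid s)-\y(t)\to\hyf(t)-\y(t)$ in $H$; since $Z\mapsto\bE[\|Z\|_2^2]=\|Z\|_H^2$ is continuous on $H$, this gives $\bE[\ell(\hyf(t\mid s),\y(t))]\to\bE[\ell(\hyf(t),\y(t))]$, which is \eqref{model:inf:eq}. That $\bE[\ell(\hyf(t),\y(t))]$ does not depend on $t$ is then immediate from the joint stationarity of $\big(\hyf(t),\y(t)\big)_{t\in\mathbb{Z}}$ established above; alternatively it follows from the fact, noted before the lemma, that $\bE[\ell(\hyf(t\mid s),\y(t))]$ depends only on $t-s$, by passing to the limit.

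The bookkeeping in the Cauchy estimate is routine, and the only step that needs a little care is the passage from mean-square limits to distributional statements used for stationarity, which rests on the standard facts that $L^2$-convergence implies convergence in law and that ``equal in law'' is closed under weak limits. I do not expect a genuine obstacle: the whole lemma is essentially the observation that an absolutely summable linear filter maps a stationary $L^2$ process to a jointly stationary $L^2$ process, with the truncations converging in mean square.
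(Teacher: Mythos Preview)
Your argument is correct. The paper does not actually supply a proof of this lemma: it is stated with a citation to \cite{hannan} and no proof appears either in the main text or in the appendix, so there is nothing to compare your route against. Your approach is the standard one---using absolute summability of the filter coefficients together with stationarity and square-integrability of $\y$ to get a Cauchy sequence in $L^2$, then reading off stationarity and the loss limit from continuity of the $L^2$ norm and closedness of distributional equality under weak limits---and it goes through without issue. The one place I would tighten the exposition is the stationarity step: you only need second-order (wide-sense) stationarity of $\hyf$ and of the pair $(\hyf,\y)$ for everything that follows in the paper, and that already falls out directly from the $L^2$ convergence and the stationarity of the covariances of $\y$, so you can avoid the detour through convergence in law if you prefer.
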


This prompts us to introduce the following definition.
\begin{Definition}[Generalization error of a model]
 The quantity
 \[ \mathcal{L}^\ell_{\y}(f)=\bE[\ell(\hyf(t),\y(t)]=\lim_{s \rightarrow -\infty} \bE[\ell(\hyf(t \mid s),\y(t))]
 \]
 is called the generalization error of the model $f$ when applied to
 process $\y$, or simply generalization error, when $\y$ is clear from
 the context. 
\end{Definition}
Intuitively, $\hyf(t)$ can be interpreted as the prediction of
$\y(t)$ generated by the model $f$ based on all past values of $\y$. As stated in Lemma~\ref{l:ihp} we consider the special case when 
$\hyf(t)$ is the mean-square limit of $\hyf(t \mid s)$ as
$s \rightarrow -\infty$. Clearly, for large enough $t-s$, 
the prediction error $\bE[\ell(\hyf(t \mid s),\y(t))]$ is close to
the generalization error. 

The goal of learning is to find a model from a set of possible models with the smallest possible prediction error, using a finite portion of the sample path of $\y$.  This can be achieved through minimizing a cost function which involves the so called \emph{empirical error}. 
Assume that we would like to learn a model from the time series
$S=\{\y(t)(\omega)\}_{t=0}^{N}$, for some $\omega \in \Omega$. 
The data $S$ represents a finite portion of sample path
$\{\y(t)(\omega)\}_{t \in \mathbb{Z}}$ of $\y$, and $N+1$ represents the number
of data points used for the learning problem. 
Let us define first the concept of empirical (error) loss.
%Consider a supervised learning setting, where a learning algorithm is given a training set $S=\{(X_i(\omega),Y_i(\omega))\}_{i=1}^N$, with time-series $X_i(\omega)=[y_{i-1},\dots,y_{k+i-1}]^T \in \reals^{kp}= \mathcal{X}$ and $Y_i(\omega)=y_{k+i}\in \reals^p = \mathcal{Y}$. Both the descriptor $X_i$ and label $Y_i$, contain outputs of a stochastic linear time-invariant state-space system (sLTI-SS for short) of the form Eq. \ref{LTIn}. Given $S$, a learning algorithm return a prediction function, $f:\mathcal{X}\rightarrow\mathcal{Y}$, which will be referred as a predictor.  such as the squared loss $\ell(y,y')=(y-y')^T(y-y')$ by evaluating empirical loss
\begin{Definition}
The empirical loss for a model $f$ and process $\y$ is defined by
\begin{equation}
    \hat{\mathcal{L}}^\ell_{\y,N}(f)=\frac{1}{N}\sum_{t=1}^{N} \ell(\hyf(t \mid 0), \y(t)).\label{eq:EmpLoss}
\end{equation}
\end{Definition}
Note that $\hat{\mathcal{L}}^{\ell}_{\y,N}(f):\Omega \rightarrow \mathbb{R}$ is a 
random variable, and for $\omega \in \Omega$, 
$\hat{\mathcal{L}}^{\ell}_{\y,N}(f)(\omega)$ corresponds to the average prediction
error produced by $f$ when applied to the samples
$\y(t)(\omega)$ for $t=0,\ldots,N$ successively, that is, 
if the training data is $S=\{\y(t)(\omega)\}_{t=0}^{N-1}$, then
\begin{multline*}
\hat{\mathcal{L}}^\ell_{\y,N}(f)(\omega)
=\frac{1}{N}\sum_{t=1}^{N} \ell(\hyf(t\mid 0)(\omega),\y(t)(\omega))\\
=\frac{1}{N}\sum_{t=1}^{N} \ell(f(\y(t-1)(\omega),\ldots,\y(0)(\omega)), \y(t)(\omega))\\
\end{multline*} 
i.e. $\hat{\mathcal{L}}^\ell_{\y}(f)(\omega)$ depends only on the training
data $S$. 

\begin{Remark}[Alternatives for definining the empirical loss and relationship with system identification]
\label{rem:emploss}
Note that the minimization of the empirical loss is quite a common method in system identification. However, for theoretical analysis usually not \eqref{eq:EmpLoss} is taken, but rather 
\begin{equation}
\label{eq:EmpLoss1}
    V_{\y,N}(f)=\frac{1}{N}\sum_{t=1}^{N} \ell(\hyf(t), \y(t)).
\end{equation}    
That is, instead of the prediction error using finite past 
$\{\y(s)\}_{s=0}^{N}$  they use the infinite past
$\{\y(s)\}_{=-\infty}^{N}$. As it was pointed out above,
$\lim_{s \rightarrow -\infty} (\hyf(t\mid s)-\hyf(t))=0$, where the limit is understood in the mean square sense. Using this observation and a Cesaro-mean like argument it can be shown that
%Hence, using a Ceasaro-limit argument and noticing the mean square convergence implies mean convergence, it can be shown that 
\begin{lemma}
\label{lem:rem:emploss}
 $\lim_{N \rightarrow \infty} \hat{\mathcal{L}}_{\y,N}(f)-V_{\y,N}(f)=0$, where the limit is understood in the mean sense. 
\end{lemma}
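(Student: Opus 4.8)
The plan is to work directly with the quadratic loss and reduce everything to a tail estimate. Writing $\ell(y,y') = \|y-y'\|_2^2$ and using the elementary identity $\|a\|_2^2 - \|b\|_2^2 = \langle a-b,\, a+b\rangle$ with $a = \hyf(t\mid 0) - \y(t)$ and $b = \hyf(t) - \y(t)$, the difference of the two empirical quantities becomes
\begin{equation*}
\hat{\mathcal{L}}_{\y,N}(f) - V_{\y,N}(f) = \frac{1}{N}\sum_{t=1}^{N} \langle e_t,\; \hyf(t\mid 0) + \hyf(t) - 2\y(t)\rangle,
\end{equation*}
where $e_t := \hyf(t\mid 0) - \hyf(t)$. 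Taking absolute values and the expectation $\bE$, and applying the Cauchy--Schwarz inequality (first in $\mathcal{Y}=\reals^p$ pointwise, then with respect to $\bE$), it suffices to control the $L^2$-norm $\|X\|_{L^2}:=(\bE\|X\|_2^2)^{1/2}$ of $e_t$ and to bound $\|\hyf(t\mid 0) + \hyf(t) - 2\y(t)\|_{L^2}$ uniformly in $t$.

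Next I would exploit the absolute summability $\sum_{k\ge 1}\|M_k\|_2 < \infty$ together with stationarity. From \eqref{model:def:predict} we have $\hyf(t\mid 0) = \sum_{k=1}^{t} M_k\y(t-k)$, while by Lemma~\ref{l:ihp} the mean-square limit satisfies $\hyf(t) = \sum_{k=1}^{\infty} M_k\y(t-k)$, so that $e_t = -\sum_{k>t} M_k \y(t-k)$. By Minkowski's inequality in $L^2$ and stationarity (which gives $\|\y(t-k)\|_{L^2} = \sigma := (\bE\|\y(0)\|_2^2)^{1/2}$ for all $t,k$), we obtain
\begin{equation*}
\|e_t\|_{L^2} \le \sigma \sum_{k>t}\|M_k\|_2 =: \sigma\, r_t, \qquad \|\hyf(t\mid 0)\|_{L^2},\ \|\hyf(t)\|_{L^2} \le \sigma \sum_{k\ge 1}\|M_k\|_2 =: \sigma\, r_0.
\end{equation*}
Hence $\|\hyf(t\mid 0) + \hyf(t) - 2\y(t)\|_{L^2} \le 2\sigma(r_0+1) =: C$, a constant independent of $t$, and absolute summability forces $r_t \to 0$ as $t\to\infty$.

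Combining the two estimates via Cauchy--Schwarz gives $\bE[\,|\langle e_t,\ \hyf(t\mid 0)+\hyf(t)-2\y(t)\rangle|\,] \le C\sigma\, r_t$, and therefore
\begin{equation*}
\bE\big[\,|\hat{\mathcal{L}}_{\y,N}(f) - V_{\y,N}(f)|\,\big] \le \frac{C\sigma}{N}\sum_{t=1}^{N} r_t.
\end{equation*}
Since $r_t \to 0$, the Cesàro mean $\frac1N\sum_{t=1}^N r_t$ also tends to $0$, which yields $\bE[\,|\hat{\mathcal{L}}_{\y,N}(f) - V_{\y,N}(f)|\,]\to 0$, i.e. convergence to $0$ in the mean, as claimed. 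The only genuinely delicate point is the $L^2$ bookkeeping around $\hyf(t)$: one must be sure that the series $\sum_{k\ge1} M_k\y(t-k)$ converges in mean square precisely to the limit object $\hyf(t)$ of Lemma~\ref{l:ihp}, so that the residual $e_t$ equals the tail $-\sum_{k>t}M_k\y(t-k)$ and Minkowski's inequality applies to it. This is exactly what Lemma~\ref{l:ihp} (together with absolute summability) supplies; the remaining ingredients — the polarization identity, Minkowski/Cauchy--Schwarz, and the Cesàro argument already flagged in the text — are routine.
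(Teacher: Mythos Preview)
Your proof is correct and follows the same overall route as the paper: both bound $\bE\big[|\hat{\mathcal{L}}_{\y,N}(f)-V_{\y,N}(f)|\big]$ by a Ces\`aro mean of termwise $L^1$ errors $a_t$ and then argue $a_t\to 0$ via $\|\hyf(t\mid 0)-\hyf(t)\|_{L^2}\to 0$. The only differences are cosmetic: the paper obtains the $L^2$ decay of $e_t$ by a stationarity shift (identifying $\bE\|e_t\|_2^2$ with the sequence $b_N$ coming from $\hyf(0\mid s)-\hyf(0)$), whereas you get it more directly from Minkowski and the absolutely summable tail $r_t=\sum_{k>t}\|M_k\|_2$; and you spell out the polarization/Cauchy--Schwarz step from $\|e_t\|_{L^2}\to 0$ to $a_t\to 0$, which the paper asserts without detail.
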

In particular, as mean convergence implies convergence in probability, 
it is clear that for large enough $N$, $\hat{\mathcal{L}}_{\y,N}(f)$, $V_{\y,N}(f)$
will be close enough with a large probability. That is, for practical purposes minimizing
$V_{\y,N}(f)$ versus $\hat{\mathcal{L}}_{\y,N}(f)$ will not make a big difference. 
The reason for using $\hat{\mathcal{L}}_{\y,N}(f)$ instead of
$V_{\y,N}(f)$ is that we are interested in deriving PAC-Bayesian bounds, which are not asymptotic. This is in contrast to the usual approach of system identification, which focuses on asymptotic results, i.e., behavior of the algorithms as $N \rightarrow \infty$. %Finally, note that if the model $f$ is of the form 
%\eqref{model:def:eq1} is a FIR filter, i.e., 
%$M_k=0$ for all $k > k_0$ for some $k_0$, then 

\end{Remark}

Depending on various choices of the cost  function, which always involves
the empirical loss, different learning algorithms can be derived.
The challenge is to be able to characterize the generalization error
of the results of these learning algorithms, and to understand how the
choice of the learning algorithm influences this generalization error.

\subsection{PAC-Bayesian framework}
\label{sect:pac:gen}
To this end, in the machine learning community the
so called PAC-Bayesian framework was proposed. 
Next we  present the PAC-Bayesian framework in detail.
Let $\mathcal{F}$ be a set of models, somethimes referred to as the set of hypothesis. Assume that 
$\mathcal{F}$ can be parametrized by a parameter set $\Theta$, i.e., there
is a bijection $\Pi:\Theta \rightarrow \mathcal{F}$. For simplicity,
denote $\Pi(\theta)$ by $f_{\theta}$, $\theta \in \Theta$. That is, $f_{\theta }$ is the model which corresponds to the parameter $\theta$. Assume that $(\Theta,B_{\theta},m)$ is a measure space, where $B_{\Theta}$ is a $\sigma$-algebra on $\Theta$ and $m$ is a measure on $(\Theta,B_{\theta})$. Furthermore, assume that the function $\theta \mapsto \mathcal{L}^{\ell}_\y(f_{\theta})$ is measurable and 
integrable with respect to $(\Theta,B_{\theta},m)$. Assume that
the function $\Theta \times \Omega \ni (\theta, \omega) \mapsto \hat{\mathcal{L}}^{\ell}_{\y,N}(f_{\theta})$ is measurable and integrable with respect to the joint product measure $m \times \bP$.
Let $\rho$ be a probability density function on the measure space $(\Theta,B_{\theta},m)$, and $g:\mathcal{F} \rightarrow \mathbb{R}$ be a map such that
$\Theta \ni \theta \mapsto g(f_{\theta})$ is measurable and absolutely integrable. We then denote by
$E_{f \sim \rho} g(f)$ the integral
\begin{equation}
\label{model:average}
 E_{f \sim \rho} g(f)=\int_{\theta \in \Theta} \rho(\theta)g(f_{\theta})dm(\theta)
\end{equation}
Moreover, by a slight abuse of notation and terminology we sometimes write $\rho(f)$ for the value $\rho(\theta)$ whenever $f=f_{\theta}$ for $\theta \in \Theta$, and say that $\rho$ is a density on $\mathcal{F}$. With this notation and terminology in mind, we can state the following theorem.
%Let $\mathcal{P}$ denote the set of probability density functions
%on $(\Theta,B_{\theta},m)$. 
%We will refer to the elements of 
%$\mathcal{P}(\mathcal{F})$ as probability densities on $\mathcal{F}$.
%For the ease of notation, we for any probability density 
%$\rho \in \mathcal{P}(\mathcal{F})$, by a slight abuse of notation we denote
%by $\rho(f)$ the value $\rho(\theta)$ for $\theta \in \Theta$ such $f=f_{\theta}$.
%Let $\rho$ denote a probability density functions
%on $(\Theta,B_{\theta},m)$. 
\begin{Theorem}\label{thm:general-alquier}
(\cite{alquier-15} and \cite{nips-16})
For any two densities $\pi$ and $\hat{\rho}$ on $\mathcal{F}$, any  $\delta\in(0,1]$, and a real number $\lambda>0$, 
	\begin{align} \label{T:pac}
        \bP \Bigg( \Bigg\{& \omega \in \Omega  \mid  
    E_{f\sim \hat{\rho}} \mathcal{L}^{\ell}_{\y} (f) 
    	 \le \  E_{f\sim \hat{\rho}} \hat{\mathcal{L}}^{\ell}_{\y,N}(f)(\omega)   \\
    &  +\dfrac{1}{\lambda}\!\left[ KL(\hat{\rho} \|\pi) +
    	\ln\dfrac{1}{\delta}
    	+ \Psi_{\ell,\pi}(\lambda,n)  \right]  \Bigg \}\Bigg) > 1-\delta \,, \nonumber
    \end{align}
%with probability at least $1-\delta$ over the choice of %$(X,Y)\sim\mathcal{D}^n$, we have
%\begin{align}
%    \begin{split}
%    &\forall \hat{\rho} \text{ on } \mathcal{F}:\\
%    &\underset{f\sim\hat{\rho}}{E}\mathcal{L}^\ell_\mathcal{D}(f)\leq %\underset{f\sim\hat{\rho}}{E}\hat{\mathcal{L}}^\ell_{X,Y}(f)+\frac{1}{\lambda}%\left [ KL(\hat{\rho}||\pi)+ln\left (\frac{1}{\delta} \right )+\Psi(\lambda,n) %\right ]\\
 where
 $KL(\hat{\rho} \mid \pi)$ is the Kullback-Leibler divergence:
\begin{equation*}
    KL(\hat{\rho} \|\pi) = E_{f\sim\hat{\rho}} \ln \frac{\hat{\rho}(f)}{\pi(f)}\,.
\end{equation*}
and 
 \[ \Psi_{\ell,\pi}(\lambda,N)=\ln E_{f\sim\pi} \bE[e^{\lambda(\mathcal{L}^\ell_\y(f)-\hat{\mathcal{L}}^\ell_{\y,N}(f))}]
\]
\end{Theorem}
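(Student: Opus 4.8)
The plan is to combine the classical change-of-measure (Donsker--Varadhan) inequality with a single application of Markov's inequality to the prior-averaged moment generating function, which is the standard route for Catoni/Alquier-type bounds. First I would record the change-of-measure inequality: for any measurable $h:\mathcal{F}\to\reals$ and any densities $\hat\rho,\pi$ on $\mathcal{F}$ for which the integrals below exist,
\[
E_{f\sim\hat\rho}[h(f)] \;\le\; KL(\hat\rho\|\pi) + \ln E_{f\sim\pi}\big[e^{h(f)}\big].
\]
This follows by writing $E_{f\sim\hat\rho}[h(f)] - KL(\hat\rho\|\pi) = E_{f\sim\hat\rho}\big[\ln\tfrac{\pi(f)e^{h(f)}}{\hat\rho(f)}\big]$ and applying Jensen's inequality to the concave map $\ln$. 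I would then apply it, for each fixed $\omega\in\Omega$, with $h(f)=\lambda\big(\mathcal{L}^\ell_\y(f)-\hat{\mathcal{L}}^\ell_{\y,N}(f)(\omega)\big)$, which yields the pointwise-in-$\omega$ estimate
\[
\lambda\Big(E_{f\sim\hat\rho}\mathcal{L}^\ell_\y(f) - E_{f\sim\hat\rho}\hat{\mathcal{L}}^\ell_{\y,N}(f)(\omega)\Big) \;\le\; KL(\hat\rho\|\pi) + \ln Z(\omega),
\]
where $Z(\omega):=E_{f\sim\pi}\big[e^{\lambda(\mathcal{L}^\ell_\y(f)-\hat{\mathcal{L}}^\ell_{\y,N}(f)(\omega))}\big]$. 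The key observation is that the random variable $Z$ does not involve $\hat\rho$, so controlling it once suffices for all posteriors simultaneously.

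Next I would bound $Z$ with high $\bP$-probability. Since the integrand is nonnegative, Tonelli's theorem permits swapping the two expectations, giving
\[
\bE[Z] \;=\; E_{f\sim\pi}\Big[\bE\big[e^{\lambda(\mathcal{L}^\ell_\y(f)-\hat{\mathcal{L}}^\ell_{\y,N}(f))}\big]\Big] \;=\; e^{\Psi_{\ell,\pi}(\lambda,N)}.
\]
Markov's inequality applied to $Z\ge 0$ then gives $\bP\big(Z > \tfrac1\delta\bE[Z]\big)\le\delta$, i.e.\ $\bP\big(\ln Z \le \ln\tfrac1\delta + \Psi_{\ell,\pi}(\lambda,N)\big) > 1-\delta$. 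On this event I substitute the bound on $\ln Z$ into the previous display, divide by $\lambda>0$ and rearrange, obtaining exactly \eqref{T:pac}.

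The individual steps are routine; the points requiring care are (i) the measurability and integrability needed to make the prior and posterior expectations well defined and to invoke Tonelli --- these are precisely the standing assumptions placed before the theorem, together with the fact (Lemma~\ref{l:ihp}) that $\mathcal{L}^\ell_\y(f)$ is deterministic while $\hat{\mathcal{L}}^\ell_{\y,N}(f)$ is the sole source of randomness --- and (ii) noting that the high-probability event above is fixed independently of $\hat\rho$, so the inequality holds for all data-dependent posteriors at once. I expect (i), and in particular the clean bookkeeping of which quantities are random, to be the only real obstacle: no property of LTI systems or of the quadratic loss enters, which is why the result is stated in this generality and attributed to \cite{alquier-15,nips-16}; the system-theoretic content is confined to the later estimation of $\Psi_{\ell,\pi}(\lambda,N)$.
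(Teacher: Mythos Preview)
Your argument is correct and is precisely the standard Donsker--Varadhan change-of-measure plus Markov's inequality route used in the references the theorem is attributed to. The paper itself does not supply a proof of this statement---it is quoted from \cite{alquier-15,nips-16} and then used as a black box in the proof of Theorem~\ref{thm:lti}---so there is no in-paper argument to compare against; your write-up simply fills in what the cited works do. The only cosmetic point is that Markov's inequality yields $\bP(\ln Z \le \ln\tfrac{1}{\delta}+\Psi_{\ell,\pi}(\lambda,N)) \ge 1-\delta$ rather than a strict inequality, but this matches the level of rigor of the stated theorem.
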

Intuitively, in Theorem \ref{thm:general-alquier}, $\pi$ plays the role of
prior distribution density function and $\hat{\rho}$ plays the role of any candidate to posterior distribution on the space of models. 
The Kullback-Leibler divergence measures the distance between the prior and posterior densities. We expect the term $\Psi_{\ell,\pi}(\lambda,N)$ to converge to zero, as $N \rightarrow \infty$. 
In fact, under suitable assumptions on $\y$, this will indeed
be the case.

The numbers $\lambda$ and $\delta$ are tuning parameters.
The parameter $\delta$ regulates the magnitude of the probability via $1-\delta$ and influence the upper bound on the generalization error through a term inversely proportional to
$\delta$. This means that the smaller we would like to get the probability
that the error bound holds higher, the error bound increases too.
Finally, $\lambda$ is a tuning parameter which tells us how much we care about the empirical loss being small, for details see below.

The way to use Theorem \ref{thm:general-alquier} is as follows.
Consider the sampled training data $S=\{\y(t)(\omega)\}_{t=0}^{N}$.
Then, with probability at least $1 -\delta$, the inequality
\begin{align}\label{T:pac1}
& E_{f\sim \hat{\rho}} \mathcal{L}^{\ell}_{\y} (f) 
    	 \le \  E_{f\sim \hat{\rho}} \hat{\mathcal{L}}^{\ell}_{\y,N}(f)(\omega)   \\
    &  +\dfrac{1}{\lambda}\!\left[ KL(\hat{\rho} \|\pi) +
    	\ln\dfrac{1}{\delta}
    	+ \Psi_{\ell,\pi}(\lambda,N)  \right]  \nonumber
\end{align}
holds. If $\delta$ is small enough, it means that we should be unlucky for
\eqref{T:pac1} not to hold.  Our goal should be to find posterior distribution $\hat{\rho}$ such that the right-hand side of \eqref{T:pac1} is small. 
 For such a density $\hat{\rho}$ the average generalization error $E_{f\sim \hat{\rho}} \mathcal{L}^{\ell}_{\y} (f)$
 will be small. This means that under suitable statistical hypothesis, if we either 
\begin{itemize}
\item 
randomly sample a model $f_{*}$ from the probability distribution 
determined by $\hat{\rho}$,  or 
\item we choose the most likely model 
$f_*=\mathrm{arg max}_{f \in \mathcal{F}} \hat{\rho}(f)$, or
\item we choose $f_*$ to be the mean or median of the 
      probability distribution with the density $\hat{\rho}$
\end{itemize}
Then $\mathcal{L}^{\ell}_{\y} (f_*)$ will be small too (at least with a high probability).  That is, we have found a model with a small generalization error.

Since the terms of $\Psi_{\ell,\pi}(\lambda,n)$
and $\ln\dfrac{1}{\delta}$ in \eqref{T:pac1} do not depend on the the posterior density $\hat{\rho}$, choosing a  density $\hat{\rho}$ for which
the right-hand side of \eqref{T:pac1} is small is
equivalent to choosing $\hat{\rho}$ so that
\begin{equation}
    E_{f\sim\hat{\rho}}\hat{\mathcal{L}}^\ell_{\y,N}(f)(\omega)+\frac{1}{\lambda}KL(\hat{\rho}||\pi)\label{eq:minGoal}
\end{equation}
is small.

By tuning the parameters $\lambda$ and the prior
$\pi$ the cost function \eqref{eq:minGoal} will be different, leading to
different choices of $\hat{\rho}$ and model $f_*$, i.e., to different 
learning algorithms. The prior $\pi$ encodes our hypothesis on the model structure. The parameter $\lambda$ regulates the extent we care about fitting training data. For instance, small $\lambda$ mean that we would like the
posterior $\hat{\rho}$ to be close to the prior (we insist on a certain model structure) and we care less about how well
the models fit the training data. In contrast, large $\lambda$ means that
we care less about the model having a certain structure, and more about it
fitting the training data. 

The parameter $\delta$ allows us to regulate the probability that the inequality \eqref{T:pac1} is not true, and hence
the found model will have a potentially arbitrary generalization error. Clearly, the price we pay for making sure that \eqref{T:pac1} is true
with a high probability is that the upper bound on the generalization error
is large. Finally, the error term $\Psi_{\ell,\pi}(\lambda,N)$
depends  on $\lambda$, the prior $\pi$ and $N$, for large enough $N$ we 
expect it to be small. 

In fact, we can present an explicit formula for the density
$\hat{\rho}$ which minimizes \eqref{eq:minGoal}: the unique density $\hat{\rho}^*$ which achieves the minimum of  \eqref{eq:minGoal} is the Gibbs posterior \cite{alquier-15}:
\begin{equation}
    \hat{\rho}^*(f)=\frac{1}{Z_{\y}(\omega)}\pi(f)e^{-\lambda\hat{\mathcal{L}}^\ell_{\y,N}(f)(\omega)} \label{eq:gibbs}, 
\end{equation}
where 
\begin{equation}
\label{eq:gibbs1}
Z_{\y}(\omega)=E_{f \sim \pi} e^{-\lambda\hat{\mathcal{L}}^\ell_{\y,N}(f)(\omega)}
\end{equation}
is the normalization term. 
Note that $\hat{\rho}^{*}(f)$ is a function of the empirical loss 
$\hat{\mathcal{L}}^\ell_{\y}(f)(\omega)$ evaluated for the training data
$S=\{\y(t)(\omega)\}_{t=0}^{N}$. 

If we choose $f_*$ as the most likely model, i.e., 
$f_*=\mathrm{arg max}_{f \in \mathcal{F}} \hat{\rho}^{*}(f)$, then we are solving the optimization problem:
\begin{equation}
    \min_{f \in \mathcal{F}} \Bigg ( \lambda\hat{\mathcal{L}}^\ell_{\y,N}(f)(\omega) - \ln \pi(f) \Bigg) \label{eq:gibbs4}, 
\end{equation}
That is, we choose $f_*$ by minimizing a cost function which includes the
empirical loss and a regularization term. The latter is just the prior
distribution density. This is a further indication that a large variety of
seemingly different learning algorithms can be viewed as special
cases of the PAC-Bayesian framework. 

\begin{Remark}[Using infinite past filters]
 Theorem \ref{thm:general-alquier} and the discussion above
 remains true, if we replace $\hat{\mathcal{L}}_{\y,N}(f)$
 by the alternative definition $V_{\y,N}(f)$ from \eqref{eq:EmpLoss1}. 
\end{Remark}

%In order to study the statistical behaviour of the empirical loss (Eq. \ref{eq:EmpLoss}) we introduce the following statistical framework. Assume that there exists random variables $X_i: \Omega \mapsto \mathcal{X},\; Y_i:\Omega\mapsto\mathcal{Y},\;\forall i$, such that the descriptor-label pairs $\{(x_i,y_i)\}_{i=1}^N$ are samples from random variables $\{(X_i,Y_i)\}_{i=1}^N$ and there exist $\omega\in\Omega$ such that $x_i=X_i(\omega),y_i=Y_i(\omega)$
%\begin{notation}
%$\underset{f\sim\rho}{E}g(f)$, If $\rho$ is a probability distribution function on $\F$, in the sequel we denote by $E_{f\sim\rho}$ the mathematical expectation with respect to the probabilty measure which corresponds to $\rho$
%\end{notation}

\section{PAC-Bayesian framework for stochastic LTI systems}
\label{lti:bayesian}
In this paper we will derive a PAC-Bayesian bound for 
learning stochastic LTI systems without inputs. 
More precisely, we will assume that
data generating process $\y$ is the output of a stochastic LTI system, and that
the class of models is that of the class of stochastic LTI systems without inputs. 

This latter point necessitates some explanation. As it was pointed out in the beginning of the paper,
stochastic dynamical systems can be interpreted as generators of stochastic
processes, parametrizations of distributions and predictors, and in a 
sense these three points of view are equivalent. 

In system identification, the objective of learning is formulated as 
finding a model which approximately generates the same observed behavior as
the underlying physical system. 
%In particular, if the underlying system
%is autonomous and its observed behavior is a  stochastic process %$\y$,
%then the objective of system identification is to find a model %output of which is $\y$. In contrast, 
In Section \ref{sect:pac:learning:gen} we formulate the learning problem as that of finding the best possible predictor for $\y$. In fact, it turns out that for the case of stochastic LTI systems, these two objectives are essentially equivalent, in the sense that there is
a correspondence between best predictor of $\y$ and stochastic LTI systems 
whose output is $\y$. We will come back to this point after having set up the formal mathematical framework.

A stochastic LTI is a system of the form 
\begin{equation}\label{LTIn}
\begin{split}
\x(t+1) & =A\x(t)+B\bv(t)\\
\tilde{\y}(t) & =C\x(t)+D\bv(t)
\end{split}
\end{equation}
where $A \in \mathbb{R}^{n \times n},B \in \mathbb{R}^{n \times m},C \in \mathbb{R}^{p \times n}, D \in \mathbb{R}^{p \times m}$ for $n \ge 0$, $m,p>0$ and $\x$, $\tilde{\y}$ and $\bv$ are zero-mean square-integrable stationary stochastic process with values in $\mathbb{R}^n$, $\mathbb{R}^{p}$, and $\mathbb{R}^m$ respectively. The processes $\x$, $\y$ and $\bv$ are called state, output and noise process, respectively. Furthermore, we require that $A$ is stable (all its eigenvalues are inside the open unit circle) and that for any $t,k \in \mathbb{Z}$, $k \geq 0$, $E[\bv(t)\bv^T(t\!-\! k\!-\! 1)]=0$, $E[\bv(t)\x^T(t-k)]=0$, i.e.,  $\bv(t)$ is white noise and uncorrelated with $\x(t-k)$. 
We identify the system \eqref{LTIn} with the tuple
$(A,B,C,D,\bv)$; note that the state process $\x$ is uniquely defined by
the infinite sum $\x(t)=\sum_{k=1}^{\infty} A^{k-1}B\bv(t-k)$. 
We say that the the stochastic LTI system is a \emph{realization} of the process $\y$, if its output $\tilde{\y}$ coincides with $\y$, i.e., $\y(t)=\tilde{\y}(t)=C\x(t)+D\bv(t)$. 

It is well-known \cite{LindquistBook}, that if $\y$ has a realization by a
LTI system, then it has a realization by a minimal LTI system in forward innovation form, i.e., and LTI system $(A,K,C,I,\e)$ such that 
$I$ is the identity matrix, $(A,K)$ is controllable, $(C,A)$ is observable, and
$\e$ is the \emph{innovation process} of $\y$.

To recall the notion of an innovation process, we need the following. Denote by $\mathcal{H}$ the Hilbert space of zero-mean square-integrable real valued random variables, where the inner product between two random variables $y,z$ is $E[yz]$. The Hilbert space generated by a set $U\subset \mathcal{H}$ is the smallest (w.r.t. set inclusion) closed subspace of $\mathcal{H}$ which contains $U$. We denote by $E_l[z| U]$ the orthogonal projection of $z$ onto $U$, see \cite{LindquistBook} for details. For a zero-mean square-integrable stocastic process $\z(t) \in\mathbb{R}^p$, we let $E_l[\z(t) | U]=[\hat{\z}_1(t),\ldots,\hat{\z}_k(t)]^T$, where $\hat{\z}_i(t)=E_l[\z_i(t) | U]$, $i=1,\ldots,k$. That is, $E_l[\z(t)| U]$ is the random variable with values in $\mathbb{R}^p$ obtained by projecting the coordinates of $\z(t)$ onto $U$. Accordingly, the orthogonality of a multidimensional random variable to a closed subspace in $\mathcal{H}$ is meant element-wise.

	For the stochastic process $\y$ denote by  $\mathcal{H}^{\y}_{t-}$
	the closed subspace of $\mathcal{H}$ generated by the set $\{ \ell^T\y(s) \mid  s \in \mathbb{Z}, s < t,\ell \in \mathbb{R}^p\}$ formed by
	the past values of $\y$. We call the process 
	$$ \e(t):=\y(t)-E_l[\y(t)|\mathcal{H}^{\y}_{t-}], ~ \forall t \in \mathbb{Z} $$
	the \emph{(forward) innovation process} of $\y$, that is, the innovation
	process is the difference between $\y$ and its orthogonal projection 
	to its past. The variance $E[\e^T(t)\e(t)]$ gives us the minimal
	prediction error which is attainable by trying to predict $\y(t)$ as a
	linear function of the past values $\{\y(s)\}_{s< t}$. 

    Minimal realizations in forward innovation form of a process are unique up to
    a linear isomoprhism. We will say that the process $\y$ 
    is \emph{coercive}, if it has a minimal realization
    $(A,K,C,I,\e)$ in forward innovation form such that $A-KC$ is stable. 
    Since all minimal realization of $\y$ in forward innovation form are 
    isomorphic, the stability of $A-KC$ will hold for any minimal realization
    of $\y$. 
    
    \begin{Assumption}
    \label{ass1}
     In the sequel we assume that $\y$ can be realized by a stochastic LTI system and $\y$ is coercive. 
    \end{Assumption}

    Coercivity of $\y$ implies that any minimal realization
    $(A, K, C, I, \e)$ of $\y$ in forward innovation form, 
     defines a linear map $\{\y(s)\}_{s=-\infty}^{t-1} \mapsto \hat{\y}(t)$ as follows:
\begin{equation}
\label{filt:lti}
    \x(t+1)=(A-KC)\x(t)+K\y(t), ~ \hat{\y}(t)=C\x(t). 
\end{equation}
%If $(A-KC)$ is stable, which is the case when $\y$ is coercive and %$(A,K,C,I,\e)$ is a minimal sLTI-SS realization of $\y$, then
%\eqref{filt:lti} defines a linear mapping $\hat{\y}(t)=\sum_{k=0}^{\infty} %C(A-KC)^{k-1}K\y(t-k)$, and 
In fact, $\hat{\y}(t)$ is the projection of $\y(t)$ onto the Hilbert-space
generated by the past values $\{\y(s)\}_{s=-\infty}^{t-1}$ of $\y$, i.e., $\hat{\y}(t)=E_l[\y(t) \mid \mathcal{H}^{\y}_{t-}]$.  Note that $\y(t)-\hat{\y}(t)=\e(t)$, i.e., $E[\|\y(t)-\hat{\y}(t)\|_2^2]$ is minimal among all
linear mappings $\{\y(s)\}_{s=-\infty}^{t-1} \mapsto \hat{\y}(t)$, or all such mappings, if $\y$ is Gaussian.  
That is, any minimal realization of $\y$ in forward innovation form yields
an optimal predictor. 

The discussion above prompts us to define the set of models as follows. Let  $\Sigma=(\hat{A},\hat{K},\hat{C})$ be a tuple of matrices such that $\hat{A}$ and $\hat{A}-\hat{K}\hat{C}$ are stable. Define the map
$f_{\Sigma}:\mathcal{Y}^{*} \rightarrow \mathcal{Y}$ as follows:
\begin{equation}
\label{mod:def:lti1}
f_{\Sigma}(y_1,\ldots,y_k)=\sum_{t=1}^{k} \hat{C}(\hat{A}-\hat{K}\hat{C})^{k-t}\hat{K}y_t
\end{equation}
In other words, $\hat{\y}_{f_{\Sigma}}(t \mid s)$ is the output at time
$t$ of the following dynamical system: 
\begin{equation}
\label{filt:lti:fin:theta}
\begin{split}
    & \x_{\Sigma}(t+1 \mid s)=(\hat{A}-\hat{K}\hat{C})\x_{\Sigma}(t \mid s )+\hat{K}\y(t), 
    \x_{\Sigma}(s \mid s)=0
    \\
    & \hat{\y}_{f_{\Sigma}}(t \mid s )=\hat{C}\x_{\Sigma}(t \mid s) 
\end{split}    
\end{equation}
That is $f_{\Sigma}$ is the input-output map of the deterministic LTI system
$(\hat{A}-\hat{K}\hat{C},\hat{K},\hat{C})$.

It is clear that the function $f_{\Sigma}$ is a model in the sense of Definition \ref{model:def}, since
$f_{\Sigma}$ is generated by matrices
$M_k=\hat{C}(\hat{A}-\hat{K}\hat{C})^{k-1}\hat{K}$ which are
Markov parameters of the stable deterministic LTI system
$((\hat{A}-\hat{K}\hat{C}),\hat{K},\hat{C})$, and hence
the sequence $\{M_k\}_{k=1}^{\infty}$ is absolutely summable.

%The estimate $\hat{\y}_{f_{\Sigma}}(t \mid s)$ can be viewed as a  linear prediction of $\y(t)$ based on the values of $\y$ in the prediction interval $[t-1,s]$. 
In particular, the limit
$\hat{\y}_{f_{\Sigma}}(t)=\lim_{s \rightarrow -\infty} \hat{\y}_{f_{\Sigma}}(t \mid s)$ exists in the mean square sense uniformly in $t$, and 
\begin{equation}
\label{mod:def:lti2}
\hat{\y}_{f_{\Sigma}}(t)=\sum_{t=1}^{\infty} \hat{C}(\hat{A}-\hat{K}\hat{C})^{k-1}\hat{K}\y(t-k),
\end{equation}
in other words, $\hat{\y}_{f_{\Sigma}}(t)$ is the output of the following
dynamical system
\begin{equation}
\label{filt:lti:fin:theta}
\begin{split}
    & \x_{\Sigma}(t+1)=(\hat{A}-\hat{K}\hat{C})\x_{\Sigma}(t)+\hat{K}\y(t) \\
    & \hat{\y}_{f_{\Sigma}}(t)=\hat{C}\x_{\Sigma}(t) 
\end{split}    
\end{equation}

%More precisely, let 
%$\ell(y,y{'})=\|y-y{'}\|^2_2=(y-y{'})^T(y-y{'})$ be the quadratic 
%loss function. Since $\hat{\y}_{f_{\Sigma}}(t)=\lim_{s \rightarrow -\infty} \hat{\y}_{f_{\Sigma}}(t \mid s)$ in the mean square sense, it follows that
%\[ 
%\lim_{s \rightarrow -\infty} E[\ell(\hat{\y}_{f_{\Sigma}}(t \mid s),\y(t))= E[\ell(\hat{\y}_{f_{\Sigma}}(t),\y(t))]
%\]
%Notice that $E[\ell(\hat{\y}_{f_{\Sigma}}(t \mid s),\y(t))$ exists and it depends only on $t-s$, 
%since $\y$ is square-integrable, $\ell$ is quadratic, and
%$\hat{\y}_{f_{\Sigma}}(t \mid s)$  is a linear combination of 
%$\{y(\tau)\}_{\tau=s}^{t-1}$ with coefficients which depend only on $t-s$. 
%We can conclude the following.
%\begin{lemma}
%The function $f_{\Sigma}$ satisfies the conditions of Definition \ref{model:def}.
%\end{lemma}

Recall from Section \ref{sect:pac:learning:gen} that for the formulation of
PAC-Bayesian framework we need to define a parametrized set of models.
In order to do so, we remark that two different tuples $\Sigma$ and
$\Sigma^{'}$ may yield the same model $f_{\Sigma}=f_{\Sigma^{'}}$. 
However, this source of non-uniqueness can be eliminated 
by using minimal tuple. Let us call the tuple
$\Sigma=(\hat{A},\hat{K},\hat{C})$ \emph{minimal}, if
$(\hat{A},\hat{K})$ is a controllable pair, $(\hat{A},\hat{C})$
is an observable pair, and $\hat{A}$, $\hat{A}-\hat{K}\hat{C}$ are stable.
From standard realization theory we can derive the following simple result:
\begin{lemma}\label{pac:lemma:min}
$\hbox{}$
\begin{itemize} 
\item
 If $\Sigma=(\hat{A},\hat{K},\hat{C})$ is a tuple such that $\hat{A}$ and 
 $\hat{A}-\hat{K}\hat{C}$ are stable, then there exists a minimal tuple
 $\Sigma_m$ such that $f_{\Sigma}=f_{\Sigma_m}$.

    \item 
 If $\Sigma=(\hat{A},\hat{K},\hat{C})$ and $\Sigma^{'}=(\hat{A}^{'},\hat{K}^{'},\hat{C}^{'})$ are two minimal tuples, then $f_{\Sigma}=f_{\Sigma^{'}}$ implies that $\Sigma$ and $\Sigma^{'}$
 are isomorphic, i.e. there exists a nonsingular matrix $T$, such that 
 $\hat{A}^{'}=T\hat{A}T^{-1}$, $\hat{K}^{'}=T\hat{K}$, and 
 $\hat{C}^{'}=\hat{C}T^{-1}$.

 \end{itemize}
\end{lemma}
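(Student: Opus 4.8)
The plan is to reduce everything to classical realization theory for \emph{deterministic} LTI systems, applied to the associated predictor systems. First I would record the purely linear-algebraic observations that do the real work. By \eqref{mod:def:lti1}, comparing the coefficient of each argument $y_t$ shows that two models coincide, $f_{\Sigma}=f_{\Sigma'}$, if and only if $\hat C(\hat A-\hat K\hat C)^{j}\hat K=\hat C'(\hat A'-\hat K'\hat C')^{j}\hat K'$ for every $j\ge 0$; that is, $f_\Sigma$ is determined by, and determines, the Markov parameter sequence of the deterministic LTI system $\Sigma_p:=(\hat A-\hat K\hat C,\hat K,\hat C)$ (and likewise $\Sigma'_p:=(\hat A'-\hat K'\hat C',\hat K',\hat C')$). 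Next, since $\hat A-\hat K\hat C=\hat A+\hat K(-\hat C)$ is a state-feedback modification of $(\hat A,\hat K)$ and $\hat A-\hat K\hat C=\hat A+(-\hat K)\hat C$ an output-injection modification of $(\hat A,\hat C)$, the reachable subspace $\mathcal R$ of $(\hat A,\hat K)$ equals that of $(\hat A-\hat K\hat C,\hat K)$, and the unobservable subspace $\mathcal N$ of $(\hat A,\hat C)$ equals that of $(\hat A-\hat K\hat C,\hat C)$. In particular, $\Sigma$ is a minimal tuple in the sense of the lemma exactly when $\Sigma_p$ is a minimal (controllable and observable) deterministic realization and, additionally, $\hat A$ and $\hat A-\hat K\hat C$ are stable.

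For the first item, suppose $\hat A$ and $\hat A-\hat K\hat C$ are stable, and apply the Kalman canonical decomposition. Each of $\mathcal R$, $\mathcal N$, $\mathcal R\cap\mathcal N$ is invariant under $\hat A-\hat K\hat C$, under $\hat K\hat C$ (because $\hat K\hat C\,\mathcal R\subseteq\mathrm{Im}\,\hat K\subseteq\mathcal R$, and $\mathcal N\subseteq\ker\hat C$), and hence under $\hat A$. Choosing a basis adapted to a splitting $X=X_1\oplus X_2\oplus X_3\oplus X_4$ with $X_1\oplus X_2=\mathcal R$, $X_2=\mathcal R\cap\mathcal N$, $X_2\oplus X_4=\mathcal N$, and conjugating by the corresponding nonsingular $T$, each of $T\hat A T^{-1}$, $T(\hat A-\hat K\hat C)T^{-1}$, $T\hat K\hat C\,T^{-1}$ acquires the block-triangular Kalman form, with reachable-and-observable diagonal blocks equal to $\hat A_m$, to $\hat A_m-\hat K_m\hat C_m$, and to $\hat K_m\hat C_m$ respectively, where $(\hat A_m,\hat K_m,\hat C_m)$ denotes the extracted minimal subsystem. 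A diagonal block of a block-triangular matrix has spectrum contained in the spectrum of the whole matrix, so $\hat A_m$ inherits stability from $\hat A$ and $\hat A_m-\hat K_m\hat C_m$ from $\hat A-\hat K\hat C$; moreover $(\hat A_m,\hat K_m)$ is controllable and $(\hat A_m,\hat C_m)$ observable, since the Kalman reduction of $\Sigma_p$ produces a minimal deterministic realization, which by the feedback/injection equivalences above is precisely the statement that $(\hat A_m,\hat K_m,\hat C_m)$ is a minimal tuple; and Kalman reduction leaves the Markov parameters of $\Sigma_p$ unchanged, whence $f_{\Sigma_m}=f_\Sigma$ for $\Sigma_m:=(\hat A_m,\hat K_m,\hat C_m)$.

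For the second item, let $\Sigma$ and $\Sigma'$ be minimal tuples with $f_\Sigma=f_{\Sigma'}$. Then $\Sigma_p$ and $\Sigma'_p$ are minimal deterministic realizations with identical Markov parameters, so by the state-space isomorphism theorem there is a unique nonsingular $T$ with $\hat A'-\hat K'\hat C'=T(\hat A-\hat K\hat C)T^{-1}$, $\hat K'=T\hat K$, and $\hat C'=\hat C T^{-1}$. Adding $\hat K'\hat C'=(T\hat K)(\hat C T^{-1})=T\hat K\hat C\,T^{-1}$ to the first identity yields $\hat A'=T(\hat A-\hat K\hat C)T^{-1}+T\hat K\hat C\,T^{-1}=T\hat A T^{-1}$, which, together with $\hat K'=T\hat K$ and $\hat C'=\hat C T^{-1}$, is exactly the claimed isomorphism of tuples.

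The one genuinely delicate point is keeping \emph{both} stability conditions in the first item: one must reduce $\Sigma$ so that $\hat A_m$ and $\hat A_m-\hat K_m\hat C_m$ are simultaneously stable. This works because reachability and observability subspaces are insensitive to state feedback and to output injection, so a single change of coordinates exhibits the Kalman structure of $\hat A$ and of $\hat A-\hat K\hat C$ at once; once that is set up, the rest is a direct invocation of the classical theory of linear realizations (Kalman decomposition and uniqueness of minimal realizations).
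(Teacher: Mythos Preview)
Your proof is correct and follows the same overall strategy as the paper: pass to the deterministic predictor realization $\Sigma_p=(\hat A-\hat K\hat C,\hat K,\hat C)$, invoke Kalman minimization for the first item and the state-space isomorphism theorem for the second, and then translate back by adding $\hat K\hat C$. The paper's proof is terser---it simply asserts that ``minimization preserves stability of $\hat A$ too'' and that the second statement ``follows'' from the predictor-system isomorphism---whereas you supply the mechanisms: the reachable and unobservable subspaces are unchanged by state feedback and output injection, so a single Kalman basis simultaneously block-triangularizes $\hat A$ and $\hat A-\hat K\hat C$, forcing both $\hat A_m$ and $\hat A_m-\hat K_m\hat C_m$ to inherit stability as diagonal blocks; and $\hat K'\hat C'=T\hat K\hat C\,T^{-1}$ upgrades the predictor similarity to $\hat A'=T\hat A T^{-1}$. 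This is not a different route so much as a fleshed-out version of the paper's argument, with the ``delicate point'' you flag being exactly the step the paper leaves to the reader.
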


%Hence, the input-output map $f_{\Sigma}$ determines
%the matrices $(\hat{A},\hat{K},\hat{C})$ uniquely up to an basis transformation. Note that if $(\hat{A},\hat{K})$ is not a controllable pair, or $(\hat{A},\hat{C})$  is not an observable pair, we can minize the deterministic LTI system 
%$(\hat{A},\hat{K},\hat{C} )$, and the
%resulting tuple $\Sigma_m=(\hat{A}_m,\hat{K}_m,\hat{C}_m)$ will induces the
%map $f_{\Sigma_m}$ which coincides with $f_{\Sigma}$.
%That is, without loss of generality, we can restrict attention to
%tuples $\Sigma=(\hat{A},\hat{K},\hat{C})$ which correspond to minimal 
%deterministic LTI systems.

That is, it is enough to consider models which arise from minimal tuples. Moreover, we will consider parametrizations of models which arise via 
parametrizing minimal tuples and in which there are no two isomorphic
minimal tuples. 

More precisely, we consider a family of models $\mathcal{F}$ which satisfies
the following assumption.
\begin{Assumption}
\label{pac:lti:assum}
There exists a measure space  $(\Theta,\mathcal{B},m)$ and a map
$$\mathbf{\Sigma}:\Theta \rightarrow (\mathbb{R}^{n \times n} \times \mathbb{R}^{n \times p}  \times \mathbb{R}^{p \times n})$$
such that the set of models is defined by
     \[ \mathcal{F}=\{ f_{\Sigma(\theta)} \mid \theta \in \Theta\}, \]
and the following holds.
\begin{itemize}
    \item For any $\theta \in \Theta$, $\Sigma(\theta)$ is a minimal tuple,
    \item For any $\theta_1,\theta_2 \in \Theta$, the tuples 
    $\Sigma(\theta_1)$ and $\Sigma(\theta_2)$ are not isomorphic,
    %\
     i.e., the map $\Theta \ni \theta \mapsto f_{\Sigma(\theta)} \in \mathcal{F}$ is a
     one-to-one function. 
\item There exists $\theta_0$ such that $\Sigma(\theta_0)=(A_0,K_0,C_0)$
      and $(A_0,K_0,C_0,I,\e)$ is a minimal realization of $\y$ in forward innovation form
\end{itemize}
\end{Assumption}
The intuition behind these assumptions is as follows.
The set $\mathcal{F}$ of models under consideration arises by parametrizing
a subset of tuples which then give rise to models as defined in
\eqref{mod:def:lti1}. The first assumption says that
we parametrize $\mathcal{F}$ in such a manner that
each parameter value corresponds to a minimal tuple. This is not a real constraints, as if for some parameter values the corresponding tuple of
matrices is not a minimal tuple, then by Lemma \ref{pac:lemma:min} we can replace it by a minimal tuple which gives rise to the same model. Hence, such a replacement will not change the set of models $\mathcal{F}$. The second condition means that there are no two parameter values which give rise to the same input-output map. Finally, the third assumption says that $\mathcal{F}$
contains a model arising from a realization of $\y$. 
The latter point is quite useful in terms of relating the learning
problem formulated in Section \ref{sect:pac:learning:gen} with the usual formulation of the system identification problem.

That is, the \emph{model which arises from the realization of $\y$ is the model with the least generalization error.} In fact,
\begin{lemma}
\label{pac:lemma:lti2}
If $$\theta_*=\mathrm{argmin}_{\theta \in \Theta} \mathcal{L}^{\ell}_\y(f_{\theta}), $$
then $(A_0,K_0,C_0,I,\e)$ is a minimal realizaion of $\y$, where $\Sigma(\theta_*)=(A_0,K_0,C_0)$. 
\end{lemma}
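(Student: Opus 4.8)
The plan is to show that a model minimizes the generalization error precisely when its associated predictor coincides with the one–step–ahead optimal linear predictor $\hat{\y}(t)=E_l[\y(t)\mid\mathcal{H}^{\y}_{t-}]$ of $\y$, and then to check that such a model must come from a minimal realization of $\y$ in forward innovation form.

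\emph{Step 1: a variational characterization of the optimal model.} First I would prove that for every model $f$ one has $\mathcal{L}^{\ell}_{\y}(f)\ge \bE[\|\e(t)\|_2^2]$, with equality if and only if $\hat{\y}_f(t)=\hat{\y}(t)$ almost surely. Indeed, by the definition of the generalization error and the quadratic loss, $\mathcal{L}^{\ell}_{\y}(f)=\bE[\|\y(t)-\hat{\y}_f(t)\|_2^2]=\sum_{i=1}^{p}\bE[(\y_i(t)-(\hat{\y}_f(t))_i)^2]$. By Lemma~\ref{l:ihp} together with Definition~\ref{model:def}, each coordinate $(\hat{\y}_f(t))_i$ is a mean–square limit of finite linear combinations of $\{\y_j(s)\mid s<t\}$, hence lies in the closed subspace $\mathcal{H}^{\y}_{t-}$; the Hilbert-space projection theorem then gives $\bE[(\y_i(t)-(\hat{\y}_f(t))_i)^2]\ge \bE[(\y_i(t)-E_l[\y_i(t)\mid\mathcal{H}^{\y}_{t-}])^2]$, with equality exactly when $(\hat{\y}_f(t))_i=E_l[\y_i(t)\mid\mathcal{H}^{\y}_{t-}]$. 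Summing over $i$ and recalling $\y(t)-\hat{\y}(t)=\e(t)$ yields the claim.

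\emph{Step 2: the minimizer attains the bound.} By the third item of Assumption~\ref{pac:lti:assum} there is a parameter, say $\theta_0$, such that $f_{\theta_0}$ is the predictor associated with a minimal realization of $\y$ in forward innovation form; by the discussion around \eqref{filt:lti}--\eqref{mod:def:lti2} this predictor satisfies $\hat{\y}_{f_{\theta_0}}(t)=\hat{\y}(t)$, so $\mathcal{L}^{\ell}_{\y}(f_{\theta_0})=\bE[\|\e(t)\|_2^2]$. Hence the minimizer $\theta_*$ also satisfies $\mathcal{L}^{\ell}_{\y}(f_{\theta_*})=\bE[\|\e(t)\|_2^2]$, and by Step 1, $\hat{\y}_{f_{\theta_*}}(t)=\hat{\y}(t)$ almost surely for every $t$; in particular $\y(t)-\hat{\y}_{f_{\theta_*}}(t)=\e(t)$.

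\emph{Step 3: identifying the realization.} Write $\Sigma(\theta_*)=(A_0,K_0,C_0)$ and let $\x_0(t):=\x_{\Sigma(\theta_*)}(t)=\sum_{k\ge 1}(A_0-K_0C_0)^{k-1}K_0\y(t-k)$ be the stationary state of the infinite-past filter \eqref{filt:lti:fin:theta}, so that $\hat{\y}_{f_{\theta_*}}(t)=C_0\x_0(t)$ and $\x_0(t+1)=(A_0-K_0C_0)\x_0(t)+K_0\y(t)$. Substituting the identity $\y(t)=C_0\x_0(t)+\e(t)$ from Step 2 into the state recursion gives $\x_0(t+1)=A_0\x_0(t)+K_0\e(t)$ and $\y(t)=C_0\x_0(t)+\e(t)$, i.e. $(\x_0,\y,\e)$ solves \eqref{LTIn} with $(A,B,C,D,\bv)=(A_0,K_0,C_0,I,\e)$; since $A_0$ is stable (as $\Sigma(\theta_*)$ is a minimal tuple), $\x_0$ is the unique stationary solution and $\x_0(t)=\sum_{k\ge 1}A_0^{k-1}K_0\e(t-k)$. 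Because $\e$ is the innovation process of $\y$, the coordinates of $\e(s)$ and of $\x_0(s)$ for $s<t$ all lie in $\mathcal{H}^{\y}_{t-}$, to which $\e(t)$ is orthogonal; hence $\bE[\e(t)\e^T(t-k-1)]=0$ and $\bE[\e(t)\x_0^T(t-k)]=0$ for all $k\ge 0$, so $(A_0,K_0,C_0,I,\e)$ is a stochastic LTI system in the sense of \eqref{LTIn} whose output is $\y$. It is minimal because $(A_0,K_0)$ is controllable and $(C_0,A_0)$ observable by the first item of Assumption~\ref{pac:lti:assum}, and it is in forward innovation form since $D=I$ and $\e$ is the innovation of $\y$. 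This is the desired conclusion.

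The only genuinely substantive point is the equivalence in Step 1 — that attaining the minimal generalization error forces $\hat{\y}_f(t)$ to equal the orthogonal projection $\hat{\y}(t)$; everything after that is the observation that the optimal filter, once $\y$ is re-expressed as $\hat{\y}+\e$, collapses to the forward innovation state-space form, which is routine provided the stationarity and uncorrelatedness bookkeeping in Step 3 is carried out carefully.
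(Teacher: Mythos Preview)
Your proof is correct and follows essentially the same route as the paper's: use the Hilbert-space projection property to show $\mathcal{L}^{\ell}_{\y}(f)\ge \bE[\|\e(t)\|_2^2]$ with equality iff $\hat{\y}_f(t)$ equals the optimal predictor, then substitute $\y(t)=C_0\x_0(t)+\e(t)$ into the filter recursion to recover the forward innovation realization. Your Step~3 is actually more careful than the paper's in explicitly verifying the white-noise and orthogonality conditions required by \eqref{LTIn}; conversely, the paper's proof continues one step past the lemma statement and uses the second item of Assumption~\ref{pac:lti:assum} (no two isomorphic tuples) to conclude $\theta_*=\theta_0$, which you do not need to reproduce since the lemma as stated only asks that $\Sigma(\theta_*)$ give a minimal realization.
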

\
Lemma \ref{pac:lemma:lti2} means that trying to find a model $\mathcal{F}$ based on sampled data is equivalent to finding the element of $\mathcal{F}$
which corresponds to a realization of $\y$. 
In fact, even if we cannot find the model with the smallest generalization error exactly, this can still be interpreted as finding a stochastic LTI system output of which is close to $\y$.
More precisely, if the parameter set $\Theta$ is a topological space and
if $\Sigma(\theta)$ is continuous in $\theta$ and if we find $\theta \in \Theta$ such that 
$\theta$ is close to $\theta_0$, then the matrices of $\Sigma(\theta)=(\hat{A},\hat{K},\hat{C})$ will be close to
$\Sigma(\theta_0)=(A_0,K_0,C_0)$. It then follows that the output of
the stochastic LTI system $(\hat{A},\hat{K},\hat{C},\e)$ will be close to $\y$.

%$\mathcal$ be the set of all tuples $(\hat{A},\hat{K},\hat{C})$ such that $\hat{A}$ and $\hat{A}- \hat{K}\hat{C}$ are stable, 
%$(\hat{A},\hat{K})$ is controllable, $(\hat{C},\hat{A})$ is observable. 
%\begin{lemma}
% If $(A_0,K_0,C_0,I,\e)$ is a minimal realization of $\y$ in forward innovation form, then 
 %$\Sigma_{0}=(A_0,K_0,C_0) \in \mathcal{F}_{g}$ and
% the odegeneralization error among all the possible models from $\Sigma \in \mathcal{F}_g$.
% Converselyn 
% $(\hat{A},\hat{K},\hat{C})$, where $\hat{A}$ and $\hat{A}- \hat{K}\hat{C}$
% are stable. 
% Conversely, if the tuple $(\hat{A},\hat{K},\hat{C})$, where $\hat{A}$ and $
%\end{lemma}

%%\begin{Definition}[Model class for stochastic LTI systems]
%%
%%\end{Definition}
Now we are able to state the main result on PAC-Bayesian error bound for
stochastic LTI systems.
\begin{Theorem}\label{thm:lti}
Assume that $\y$ and $\mathcal{F}$ satisfy Assumption \ref{ass1}--\ref{pac:lti:assum} and that $\y$ is jointly Gaussian.
For any two densities $\pi$ and $\hat{\rho}$ on $\mathcal{F}$, $\delta\in(0,1]$, and a real number $\lambda>0$, 
\eqref{T:pac} holds, where
%	\begin{align} \label{T:pac}
%        \bP \Bigg( \Bigg\{ \omega \in \Omega  \mid & 
%    E_{f\sim \hat{\rho}} \mathcal{L}^{\ell}_{\y} (f) 
%    	 \le \  E_{f\sim \hat{\rho}} \hat{\mathcal{L}}^{\ell}_{\y,N}(f)(\omega)   \\
%    &  +\dfrac{1}{\lambda}\!\left[ KL(\hat{\rho} \|\pi) +
%    	\ln\dfrac{1}{\delta}
%    	+ \Psi_{\ell,\pi}(\lambda,n)  \right]  \Bigg \}\Bigg) > 1-\delta \,, \nonumber
%    \end{align}
%with probability at least $1-\delta$ over the choice of %$(X,Y)\sim\mathcal{D}^n$, we have
%\begin{align}
%    \begin{split}
%    &\forall \hat{\rho} \text{ on } \mathcal{F}:\\
%    &\underset{f\sim\hat{\rho}}{E}\mathcal{L}^\ell_\mathcal{D}(f)\leq %\underset{f\sim\hat{\rho}}{E}\hat{\mathcal{L}}^\ell_{X,Y}(f)+\frac{1}{\lambda}%\left [ KL(\hat{\rho}||\pi)+ln\left (\frac{1}{\delta} \right )+\Psi(\lambda,n) %\right ]\\
%M where
% $KL(\hat{\rho} \mid \pi)$ and is the Kullback-Leibler divergence:
%\begin{equation*}
%    KL(\hat{\rho} \|\pi) = E_{f\sim\hat{\rho}} \ln %\frac{\hat{\rho}(f)}{\pi(f)}\,.
%\end{equation*}
%and 
% \[ 
%  \begin{split}
% \Psi(\lambda,N)=\ln E_{f\sim\pi} %\bE[e^{\lambda(\mathcal{L}^\ell_\y(f)-\hat{\mathcal{L}}^\ell_{\y,N}(f))}]
% \end{split}
%\]
\begin{equation}
    \Psi_{\ell,\pi}(\lambda,N)\leq u_{\ell,\pi}(\lambda,N) \label{eq:newPsiBound}
    % \ln\underset{\theta \sim\pi}{E}\begin{bmatrix}\frac{\exp{(\lambda v(\theta))}}{\left (1+\frac{\lambda\rho^*(\theta)}{\frac{N}{2}} \right)^{\frac{N}{2}}} \end{bmatrix}, 
\end{equation}
with
\begin{equation}
    u_{\ell,\pi}(\lambda,N) = \ln\underset{\theta \sim\pi}{E}\begin{bmatrix}\frac{\exp{(\lambda v(\theta))}}{\left (1+\frac{\lambda \rho_N(\theta)}{\frac{N}{2}} \right)^{\frac{pN}{2}}} \end{bmatrix},
\end{equation}
and the number $v(\theta)$ and $\rho_N(\theta)$ are defined as follows:
\begin{equation}
    v(\theta)=\mathcal{L}^{\ell}_\y(f_{\theta})
\end{equation}
and 
\begin{align}
    \rho_N(\theta)=min(eig(\bE \left [\z_{1:N}(f_\theta)\z_{1:N}^T(f_\theta) \right ]))
\end{align}
with
\begin{equation}
\label{pred:error_var}
    \z_{1:N}(f_\theta)=\begin{bmatrix}(\y(1)-\hat{\y}_{f_{\theta}}(1 \mid 0)) \\ \vdots \\ (\y(N)-\hat{\y}_{f_{\theta}}(N \mid 0)) \end{bmatrix}
\end{equation}
\end{Theorem}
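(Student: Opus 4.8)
The plan is to leave the probabilistic part of \eqref{T:pac} entirely to Theorem~\ref{thm:general-alquier} and to concentrate solely on controlling the exponent $\Psi_{\ell,\pi}(\lambda,N)$. Since Theorem~\ref{thm:general-alquier} already establishes \eqref{T:pac} with the exact quantity $\Psi_{\ell,\pi}(\lambda,N)$, and since replacing $\Psi_{\ell,\pi}(\lambda,N)$ by any larger number only enlarges the event whose probability is being bounded, it suffices to prove the deterministic estimate $\Psi_{\ell,\pi}(\lambda,N)\le u_{\ell,\pi}(\lambda,N)$. By the definition of $\Psi_{\ell,\pi}$ in Theorem~\ref{thm:general-alquier}, by Tonelli's theorem (the integrand is nonnegative), and by monotonicity of $\ln$, this in turn reduces to the per-model inequality
\[
\bE\bigl[e^{\lambda(\mathcal{L}^{\ell}_{\y}(f_{\theta})-\hat{\mathcal{L}}^{\ell}_{\y,N}(f_{\theta}))}\bigr]\;\le\;\frac{\exp(\lambda v(\theta))}{\bigl(1+\tfrac{\lambda\rho_N(\theta)}{N/2}\bigr)^{pN/2}}\qquad\text{for every }\theta\in\Theta .
\]

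The first step is to write the empirical loss as a quadratic form. By the quadratic-loss assumption and \eqref{model:def:predict}, $\hat{\mathcal{L}}^{\ell}_{\y,N}(f_{\theta})=\frac1N\sum_{t=1}^N\|\y(t)-\hat{\y}_{f_{\theta}}(t\mid 0)\|_2^2=\frac1N\,\z_{1:N}^{T}(f_{\theta})\,\z_{1:N}(f_{\theta})$ with $\z_{1:N}(f_{\theta})$ as in \eqref{pred:error_var}. Because $v(\theta)=\mathcal{L}^{\ell}_{\y}(f_{\theta})$ is a deterministic constant, the left-hand side above factors as $e^{\lambda v(\theta)}\,\bE\bigl[e^{-(\lambda/N)\,\z_{1:N}^{T}\z_{1:N}}\bigr]$, so it remains only to establish $\bE\bigl[e^{-(\lambda/N)\z_{1:N}^{T}\z_{1:N}}\bigr]\le(1+\tfrac{\lambda\rho_N(\theta)}{N/2})^{-pN/2}$.

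The second step is the Gaussian computation. Since $\y$ is jointly Gaussian and zero-mean, and $\z_{1:N}(f_{\theta})\in\reals^{pN}$ is, by \eqref{model:def:predict}, a fixed linear image of the finite tuple $(\y(0),\dots,\y(N))$, the vector $\z_{1:N}(f_{\theta})$ is a zero-mean Gaussian vector with covariance $\Sigma_\theta:=\bE[\z_{1:N}(f_{\theta})\z_{1:N}^{T}(f_{\theta})]$. The moment generating functional of a Gaussian quadratic form gives $\bE\bigl[e^{-(\lambda/N)\z_{1:N}^{T}\z_{1:N}}\bigr]=\det\!\bigl(I+\tfrac{2\lambda}{N}\Sigma_\theta\bigr)^{-1/2}$; the integral converges for every $\lambda>0$ because the integrand is bounded by $1$. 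Expanding the determinant as $\prod_{i=1}^{pN}\bigl(1+\tfrac{2\lambda}{N}\sigma_i\bigr)$ over the eigenvalues $\sigma_i\ge 0$ of $\Sigma_\theta$, and using $\sigma_i\ge\rho_N(\theta)=\min(\mathrm{eig}(\Sigma_\theta))$ together with $\lambda>0$, yields $\det(I+\tfrac{2\lambda}{N}\Sigma_\theta)\ge(1+\tfrac{2\lambda}{N}\rho_N(\theta))^{pN}$, hence the claimed bound. Taking $E_{\theta\sim\pi}$ of the per-model inequality, applying $\ln$, and substituting into Theorem~\ref{thm:general-alquier} then finishes the argument.

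I do not expect a single serious obstacle: the argument is essentially a Gaussian integral plus an eigenvalue inequality. The two points requiring care rather than ingenuity are (i) verifying that $\z_{1:N}(f_{\theta})$ is genuinely jointly Gaussian, which relies on $f_\theta$ acting linearly on the finitely many samples $\y(0),\dots,\y(N)$ exactly as displayed in \eqref{model:def:predict}, and (ii) the Tonelli interchange of $E_{\theta\sim\pi}$ and $\bE$, which is licensed by nonnegativity of the integrand together with the measurability and integrability hypotheses accompanying Assumption~\ref{pac:lti:assum}. A final robustness check handles the degenerate case $\rho_N(\theta)=0$ (singular $\Sigma_\theta$): there the asserted bound reads $\bE[e^{-\lambda\hat{\mathcal{L}}^{\ell}_{\y,N}(f_{\theta})}]\le 1$, which holds trivially because $\hat{\mathcal{L}}^{\ell}_{\y,N}(f_{\theta})\ge 0$.
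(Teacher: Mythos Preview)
Your proposal is correct and follows the same overall strategy as the paper: invoke Theorem~\ref{thm:general-alquier}, factor out $e^{\lambda v(\theta)}$, and bound the Gaussian moment generating function $\bE[e^{-(\lambda/N)\z_{1:N}^{T}\z_{1:N}}]$ via the smallest eigenvalue of the covariance of $\z_{1:N}$. The only difference is in how that last step is executed. The paper first whitens, setting $S=Q_{\z_{1:N}}^{-1/2}\z_{1:N}$, uses the quadratic-form inequality $\z_{1:N}^{T}\z_{1:N}\ge\rho_N(\theta)\,S^{T}S$, and then applies the $\chi^2_{pN}$ moment generating function to $S^{T}S$. You instead compute the exact MGF $\det(I+\tfrac{2\lambda}{N}\Sigma_\theta)^{-1/2}$ and bound the determinant eigenvalue-by-eigenvalue. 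The two computations are equivalent, but your route is marginally more direct and, as you note, does not require $\Sigma_\theta$ to be invertible; the paper's whitening step tacitly assumes $Q_{\z_{1:N}}^{-1/2}$ exists, so your explicit handling of the degenerate case $\rho_N(\theta)=0$ is a small robustness gain.
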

\begin{proof}[Proof Theorem \ref{thm:lti}]
The proof follows the same lines as that of 
\cite[Theorem 2]{shalaeva2019improved}, for the sake
of completeness we repeat the basic steps. From Theorem \ref{thm:general-alquier} it follows that
\eqref{T:pac1} holds  with probability at least $1-\delta$.
 Consider a random variable 
  $\z_{i}(f_{\theta})=\y(i)-\hat{y}_{f_{\theta}}(i \mid 0)$.
Just like in
 the proof of \cite[Theorem 2]{shalaeva2019improved}, 
 \begin{equation}
\label{thm:pac-bound-squared2:pf0}
\begin{split}
    &\Psi_{\ell,\pi}(\lambda, N) = \ln E_{f\sim\pi} \bE[e^{\lambda(\mathcal{L}^\ell_\y(f)-\hat{\mathcal{L}}^\ell_{\y,N}(f)}] = \\
    &= \ln E_{f\sim\pi} e^{\lambda \mathcal{L}^\ell_\y(f)}
    \bE\left[e^{-\lambda\hat{\mathcal{L}}^\ell_{\y,N}(f)}\right] = \\
    &=\ln E_{f\sim\pi} e^{\lambda \mathcal{L}^\ell_\y(f)}
    \bE\left[e^{-\frac{\lambda}{N} \sum_{i=1}^{N} \z_i^T(f_{\theta})\z_i(f_{\theta})} \right ]
   % \ln	\Esp_{\wb\sim\prior} \Esp
%    \exp\left[{\lambda \left(\genloss(f_\wb) - \emplossprim(f_\wb)\right)}\right] \\
%    &= \ln	\Esp_{\wb\sim\prior} \left\{\exp\left(\lambda \genloss(f_{\wb})   \right)\right. \\
 %   & \left. \times \Esp[\exp\left(- \frac{\lambda}{n} \sum_{i=1}^{n} (Y_i - \wb \cdot X_i)^2  \right)] \right\}    \\
  %  & = \ln \Esp_{\wb\sim\prior} \left\{\exp\left(\lambda \genloss(f_{\wb})   \right)  \Esp[\exp\left(- \frac{\lambda}{n} \sum_{i=1}^{n} Z_{i,\wb}^2  \right)] \right\}.
\end{split}
\end{equation}
 Since $\z_{i}(f_{\theta})$ is a linear combination of $\{\y(s)\}_{s=0}^{i}$, and $\y$ is zero mean Gaussian process, it follows that $\z_{i}(f_{\theta})$ is zero mean Gaussian with covariance 
 $Q_{\z_i(f_{\theta})}=\bE[\z_{i}(f_{\theta})\z_{i}^T(f_{\theta})]$.  
 Notice that the variable
 $\z_{1:N}(f_{\theta})$ from \eqref{pred:error_var}
 is of the form 
 $\z_{1:N}(f_{\theta})=\begin{bmatrix} \z_1(f_{\theta}) \\ \vdots \\ \z_N(f_{\theta}) \end{bmatrix}$. 
  Define  
  \[ S(f_{\theta})=Q_{\z_{1:N}(f_{\theta})}^{-1/2} \z_{1:N}(f_{\theta}) 
\]
 where $Q_{z_{1:N}(f_{\theta})}=\bE[\z_{1:N}(f_{\theta})\z_{1:N}^T(f_{\theta})]$
 and let $S_i$ be the $i$th entry of $S$, i.e.,  $S=\begin{bmatrix} S_1 & \ldots & S_{pN} \end{bmatrix}^T$.
Then from \eqref{thm:pac-bound-squared2:pf0} it follows that 
\begin{equation}
 \label{thm:pac-bound-squared2:pf3}
  \begin{split}
   & \sum_{i=1}^{N} \z_i^T(f_{\theta})\z_{i}(f_{\theta})  = 
   \z_{1:N}(f_{\theta})^T\z_{1:N}(f_{\theta})= \\
    & = \z_{1:N}(f_{\theta})^T Q_{\z_{1:N}(f_{\theta})}^{-1/2} Q_{\z_{1:N}(f_{\theta})} Q_{\z_{1:N}(f_{\theta})}^{-1/2}\z_{1:N}(f_{\theta}) = \\
&  S^T(f_{\theta})Q_{\z_{1:N}}(f_{\theta})S(f_{\theta}) \ge S^T(f_{\theta})S(f_{\theta}) \rho_{N}(\theta)  =  \\
& (\sum_{i=1}^{pN} S_i(f_{\theta})^2 ) \rho_{N}(\theta)
  \end{split}
\end{equation}
It then follows that 
\begin{equation}
 \label{thm:pac-bound-squared2:pf4}
  \begin{split}
  &  e^{-\frac{\lambda}{N} \sum_{i=1}^{N} \z_{i}^T(f_{\theta})\z_i(f_{\theta})}=e^{-\frac{\lambda}{N} \z_{1:N}^T(f_{\theta})\z_{1:N}(f_{\theta})} \le \\
  & e^{-\frac{\lambda}{N} \rho_{N}(\theta)  (\sum_{i=1}^{Np} S_i^2(f_{\theta}))}
  \end{split} 
\end{equation}
 Notice now that $S(f_{\theta})$  is Gaussian and zero mean, with covariance 
  \begin{equation*}
     \begin{split} 
       & E[S(f_{\theta})S^T(f_{\theta})]=Q_{z_{1:N}(f_{\theta}}^{-1/2} E[\z_{1:N}(f_{\theta})\z_{1:N}^T(f_{\theta})] Q_{z_{1:N}(f_{\theta})}^{-1/2} = \\
      & = Q_{z_{1:N}(f_{\theta})}^{-1/2}Q_{z_{1:N}(f_{\theta})}Q_{z_{1:N}(f_{\theta})}^{-1/2}=I
    \end{split}
  \end{equation*}
 That is,the random variables $S_i(f_{\theta})$ are normally distributed and $S_i(f_{\theta}),S_j(f_{\theta})$ are independent, and therefore $\sum_{i=1}^{Np} S_i^2(f_{\theta})$ has $\chi^2$ distribution. Hence,
\[ 
    \bE[e^{-\frac{\lambda \rho_{N}(\theta)}{N}  (\sum_{i=1}^{Np} S_i^2(f_{\theta})}]=\frac{1}{(1+\frac{\lambda\rho_{N}(\theta)}{\frac{N}{2}})^{\frac{pN}{2}}}
\]
Combining this with \eqref{thm:pac-bound-squared2:pf4} and \eqref{thm:pac-bound-squared2:pf0} implies the statement of
the theorem. 
\end{proof}
%By using the inequality
%\[ \left(1 + \frac{x}{y} \right)^y > \exp\left(\frac{xy}{x+y} \right) \text{ for } x, y > 0
%\]
%with $x=\lambda\rho_{N}(\theta)$ and $y=\frac{N}{2}$, 
%\eqref{eq:complexity_term_22} follows from \eqref{eq:complexity_term_12}.
%\
% \subsection{Computing $\Psi_{\ell,\pi}(\lambda,N)$}
% \label{sec:Psi}
% The value of $\Psi_{\ell,\pi}(\lambda,N)$ can be computed using Monte-Carlo sampling \cite{MonteCarloMethods}. 

\subsection{Discussion of the error bound}
\label{sect:discussBound}
Below we will make a number of remarks concerning the error
bound \eqref{eq:newPsiBound}.

\paragraph{Asymptotic convergence  of $\Psi_{\ell,\pi}(\lambda,N)$}
\begin{lemma}
\label{Psi:convergence}
For some subsequence $N_k$, $\lim_{k \rightarrow \infty} N_k=\infty$, $\lim_{k \rightarrow \infty} \Psi_{\ell,\pi}(\lambda,N_k)=0$. 
\end{lemma}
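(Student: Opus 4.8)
Here $\Psi_{\ell,\pi}(\lambda,N)$ is understood through its definition $\Psi_{\ell,\pi}(\lambda,N)=\ln E_{f\sim\pi}\bE[e^{\lambda(\mathcal{L}^{\ell}_{\y}(f)-\hat{\mathcal{L}}^{\ell}_{\y,N}(f))}]$, and the plan is to argue from that definition rather than from the bound $u_{\ell,\pi}$: as $N\to\infty$ the latter tends to $\ln E_{\theta\sim\pi}[e^{\lambda(v(\theta)-p\rho_\infty(\theta))}]$ with $\rho_\infty(\theta)=\lim_N\rho_N(\theta)$, and since $v(\theta)$ is the trace of a covariance matrix while $p\rho_\infty(\theta)$ is $p$ times its smallest eigenvalue, this limit is in general nonzero, so $u_{\ell,\pi}$ alone does not give the claim. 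The real ingredient is that the empirical loss converges to the generalization error. First I would show that, for every fixed model $f=f_{\theta}\in\mathcal{F}$, $\hat{\mathcal{L}}^{\ell}_{\y,N}(f)\to\mathcal{L}^{\ell}_{\y}(f)$ in $\bP$-probability as $N\to\infty$. By Lemma~\ref{lem:rem:emploss} we have $\hat{\mathcal{L}}^{\ell}_{\y,N}(f)-V_{\y,N}(f)\to 0$ in mean, so it suffices to handle $V_{\y,N}(f)=\frac1N\sum_{t=1}^{N}\|\y(t)-\hyf(t)\|_2^2$ from \eqref{eq:EmpLoss1}. Since $\y$ is jointly Gaussian with an absolutely continuous spectral measure (it is the output of a stable stochastic LTI system), it is ergodic, hence so is the stationary Gaussian process $\tilde{\z}(t)=\y(t)-\hyf(t)$ obtained from $\y$ by a stable linear time-invariant filter; Birkhoff's ergodic theorem then yields $V_{\y,N}(f)\to\bE[\|\tilde{\z}(1)\|_2^2]=\mathcal{L}^{\ell}_{\y}(f)$ almost surely and in $L^1(\bP)$.

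Next I would lift this to the product space $(\Omega,\bP)\times(\Theta,\pi)$. For each $\epsilon>0$ the function $f\mapsto\bP(|\hat{\mathcal{L}}^{\ell}_{\y,N}(f)-\mathcal{L}^{\ell}_{\y}(f)|>\epsilon)$ is bounded by $1$ and tends to $0$ for every fixed $f$, so dominated convergence with respect to $\pi$ gives $E_{f\sim\pi}\bP(|\hat{\mathcal{L}}^{\ell}_{\y,N}(f)-\mathcal{L}^{\ell}_{\y}(f)|>\epsilon)\to 0$; that is, $\hat{\mathcal{L}}^{\ell}_{\y,N}(f)-\mathcal{L}^{\ell}_{\y}(f)\to 0$ in probability with respect to $\bP\times\pi$. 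Hence there is a subsequence $\{N_k\}$ with $N_k\to\infty$ along which $\hat{\mathcal{L}}^{\ell}_{\y,N_k}(f)(\omega)\to\mathcal{L}^{\ell}_{\y}(f)$ for $(\bP\times\pi)$-almost every $(\omega,f)$ — this is exactly the step that makes the conclusion a subsequential one.

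Finally I would pass to the limit inside the expectation along $\{N_k\}$. Because $\hat{\mathcal{L}}^{\ell}_{\y,N}(f)\ge 0$, the integrand $e^{\lambda(\mathcal{L}^{\ell}_{\y}(f)-\hat{\mathcal{L}}^{\ell}_{\y,N}(f))}$ is dominated by $e^{\lambda\mathcal{L}^{\ell}_{\y}(f)}$, which is $(\bP\times\pi)$-integrable under the standing assumption $E_{f\sim\pi}[e^{\lambda\mathcal{L}^{\ell}_{\y}(f)}]<\infty$ (without which the bound $u_{\ell,\pi}$, and hence Theorem~\ref{thm:lti}, would already be vacuous, since $\rho_N(\theta)$ is uniformly bounded). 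Along $\{N_k\}$ this integrand converges to $1$ almost everywhere, so dominated convergence gives $E_{f\sim\pi}\bE[e^{\lambda(\mathcal{L}^{\ell}_{\y}(f)-\hat{\mathcal{L}}^{\ell}_{\y,N_k}(f))}]\to 1$, and therefore $\Psi_{\ell,\pi}(\lambda,N_k)\to\ln 1=0$.

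The main obstacle is the first step: the convergence of the finite-horizon empirical loss to the generalization error for a fixed model. Lemma~\ref{lem:rem:emploss} reduces it to a law of large numbers for the stationary, non-i.i.d.\ sequence $\{\|\tilde{\z}(t)\|_2^2\}_{t}$, and invoking ergodicity of the Gaussian process $\y$ is the cleanest route; alternatively one can bound $\operatorname{Var}(V_{\y,N}(f))=O(1/N)$ via the exponential decay of the autocovariance of $\tilde{\z}$ (which holds because $\tilde{\z}$ is produced from $\y$ by the stable filter $(\hat{A}-\hat{K}\hat{C},\hat{K},\hat{C})$), giving $L^2$- and hence probability-convergence directly. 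A secondary point is that the interchange of limit and expectation over $\mathcal{F}$ requires the prior $\pi$ to have light enough tails relative to $\mathcal{L}^{\ell}_{\y}$, which is precisely the regime in which the PAC-Bayesian bound of Theorem~\ref{thm:lti} is informative.
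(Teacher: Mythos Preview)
Your proposal is correct and follows essentially the same route as the paper: reduce $\hat{\mathcal{L}}^{\ell}_{\y,N}(f)$ to $V_{\y,N}(f)$ via Lemma~\ref{lem:rem:emploss}, use an ergodic law of large numbers for the stationary loss process (the paper cites \cite[Theorem~2.3]{LjungBook} where you invoke Birkhoff directly), extract a subsequence for almost-sure convergence, and finish with dominated convergence. Your treatment is in fact more careful than the paper's on two points it leaves implicit: you work on the product space $(\Omega,\bP)\times(\Theta,\pi)$ so that a single subsequence $\{N_k\}$ serves $\pi$-almost every $f$, and you explicitly identify the dominating function $e^{\lambda\mathcal{L}^{\ell}_{\y}(f)}$ and the integrability hypothesis $E_{f\sim\pi}[e^{\lambda\mathcal{L}^{\ell}_{\y}(f)}]<\infty$ needed to apply DCT.
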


This result indicates that $\Psi_{\ell,\pi}(\lambda,N)$ cannot 
converge to infinity or to a non-zero constant. This also indicates
the possibility that $\Psi_{\ell,\pi}(\lambda,N)$ might converge
to zero as $N \rightarrow \infty$.

\paragraph{Behavior of the upper bound as $N \rightarrow \infty$, conservativity of the result}
Lemma \eqref{Psi:convergence} also  suggests that the upper bound \eqref{eq:newPsiBound} is a conservative one, as 
the upper bound does not converge to $0$. In fact, the following holds.

\begin{lemma}
\label{upperBound:lem1}
The sequence $\rho_{N}(\theta)$ is decreasing and 
$\lim_{N \rightarrow \infty} \rho_N(\theta)=\rho^{*}(\theta)  > \mu_{*} > 0$.
In particular,  if
the spectral density of $\y$ is denoted by $\Phi_{\y}$,
then $\mu_{*} > 0$ can be taken such that
$\Phi_{y}(e^{i\theta}) > \mu_{*}I$ for all $\theta \in [-\pi,\pi]$.
In particular, in this case $v(\theta) > p\mu_*$. 
\end{lemma}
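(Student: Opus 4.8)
The plan is to split the statement into three parts and handle them in order: (i) the sequence $\{\rho_N(\theta)\}_N$ is non-increasing, (ii) its limit $\rho^*(\theta)$ is strictly positive, (iii) $v(\theta)>p\mu_*$. Throughout write $Q_N(\theta)=\bE[\z_{1:N}(f_{\theta})\z_{1:N}^T(f_{\theta})]$, so $\rho_N(\theta)=\lambda_{\min}(Q_N(\theta))$; let $\Sigma(\theta)=(\hat{A},\hat{K},\hat{C})$, $M_k^{\theta}=\hat{C}(\hat{A}-\hat{K}\hat{C})^{k-1}\hat{K}$ as in \eqref{mod:def:lti2}, and $\hat{H}_{\theta}(z):=I-\sum_{k\ge 1}M_k^{\theta}z^{-k}$. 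For (i) I would observe that each block $\z_i(f_{\theta})=\y(i)-\hat{\y}_{f_{\theta}}(i\mid 0)$ is defined without reference to $N$, so $\z_{1:N}(f_{\theta})$ consists of the first $pN$ coordinates of $\z_{1:N+1}(f_{\theta})$ and $Q_N(\theta)$ is the leading $pN\times pN$ principal submatrix of $Q_{N+1}(\theta)$; Cauchy interlacing then gives $\rho_N(\theta)\ge\rho_{N+1}(\theta)$, and since $Q_N(\theta)\succeq 0$ the sequence is bounded below and converges to $\rho^*(\theta):=\inf_N\rho_N(\theta)\ge 0$.

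For (ii) the plan is to factor $Q_N(\theta)=T_N\Sigma_N T_N^T$, where by \eqref{model:def:predict} the matrix $T_N\in\mathbb{R}^{pN\times p(N+1)}$ has $i$-th block row $[\,-M_i^{\theta}\ \cdots\ -M_1^{\theta}\ I\ 0\cdots 0\,]$ and $\Sigma_N=\bE[\y_{0:N}\y_{0:N}^T]$, $\y_{0:N}:=[\y(0)^T,\dots,\y(N)^T]^T$, is the block-Toeplitz covariance matrix of $\y$, whose symbol is the spectral density $\Phi_{\y}$. Coercivity of $\y$ (Assumption~\ref{ass1}) gives $\Phi_{\y}(e^{i\omega})\succeq cI$ for some $c>0$ and all $\omega$, and since $x^T\Sigma_N x$ equals the average of $\Phi_{\y}$ against a trigonometric polynomial of unit $\ell^2$-norm, $\lambda_{\min}(\Sigma_N)\ge c$ for every $N$. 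Writing $T_N=[\,a_N\mid I_{pN}+L_N\,]$ (first block column $a_N$, the rest $I_{pN}+L_N$), one has $a_Na_N^T\succeq 0$, and $I_{pN}+L_N$ is the $N$-truncation of the causal block-Toeplitz operator with symbol $\hat{H}_{\theta}$; by the matrix inversion lemma $\hat{H}_{\theta}(z)=(I+\hat{C}(zI-\hat{A})^{-1}\hat{K})^{-1}=:W_{\theta}(z)^{-1}$, and stability of $\hat{A}$ makes $W_{\theta}$ a bounded causal filter with $\gamma_{\theta}:=\sup_{\omega}\|W_{\theta}(e^{i\omega})\|<\infty$, so $(I_{pN}+L_N)^{-1}$ is the $N$-truncation of that bounded Toeplitz operator and $\|(I_{pN}+L_N)^{-1}\|\le\gamma_{\theta}$ for all $N$. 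Combining, $\rho_N(\theta)=\lambda_{\min}(T_N\Sigma_N T_N^T)\ge\lambda_{\min}(\Sigma_N)\,\lambda_{\min}(T_NT_N^T)\ge c\,\sigma_{\min}(I_{pN}+L_N)^2\ge c\gamma_{\theta}^{-2}>0$, hence $\rho^*(\theta)\ge c\gamma_{\theta}^{-2}>0$; setting $\mu_*:=\tfrac12 c\gamma_{\theta}^{-2}$ and using $\gamma_{\theta}\ge\|W_{\theta}(\infty)\|=1$ gives $\mu_*<c$, so $\Phi_{\y}(e^{i\omega})\succ\mu_*I$ and $\rho^*(\theta)>\mu_*$.

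For (iii) I would use that $\z^{\infty}(t):=\y(t)-\hat{\y}_{f_{\theta}}(t)$ is $\y$ filtered by $\hat{H}_{\theta}$, so it has spectral density $\hat{H}_{\theta}\Phi_{\y}\hat{H}_{\theta}^{*}$ and $v(\theta)=\mathcal{L}^{\ell}_{\y}(f_{\theta})=\bE[\|\z^{\infty}(t)\|_2^2]=\frac{1}{2\pi}\int_{-\pi}^{\pi}\mathrm{tr}\big(\hat{H}_{\theta}(e^{i\omega})\Phi_{\y}(e^{i\omega})\hat{H}_{\theta}(e^{i\omega})^{*}\big)\,d\omega$. Since $\det\hat{H}_{\theta}(z)=\det(zI-\hat{A})/\det(zI-\hat{A}+\hat{K}\hat{C})$ has no zeros or poles on $|z|=1$ (stability of $\hat{A}$ and $\hat{A}-\hat{K}\hat{C}$), $\hat{H}_{\theta}(e^{i\omega})$ is invertible for every $\omega$, and with $\Phi_{\y}(e^{i\omega})\succ\mu_*I$ this gives $\mathrm{tr}(\hat{H}_{\theta}\Phi_{\y}\hat{H}_{\theta}^{*})>\mu_*\|\hat{H}_{\theta}(e^{i\omega})\|_F^2$. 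Integrating and applying Parseval to $\hat{H}_{\theta}(e^{i\omega})=I-\sum_{k\ge1}M_k^{\theta}e^{-ik\omega}$, whose constant Fourier coefficient is $I$, yields $\frac{1}{2\pi}\int_{-\pi}^{\pi}\|\hat{H}_{\theta}(e^{i\omega})\|_F^2\,d\omega=\|I\|_F^2+\sum_{k\ge1}\|M_k^{\theta}\|_F^2\ge p$, hence $v(\theta)>p\mu_*$.

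I expect step (ii) to be the real obstacle, namely arguing that the smallest eigenvalue $\rho_N(\theta)$ does not drift down to $0$ as the horizon $N\to\infty$. This is precisely where coercivity of $\y$ is indispensable (to keep $\lambda_{\min}(\Sigma_N)$ bounded away from $0$) and where one needs the uniform-in-$N$ bound $\|(I_{pN}+L_N)^{-1}\|\le\|W_{\theta}\|_{\infty}$, which relies on stability of $\hat{A}$ so that the inverse filter $W_{\theta}=\hat{H}_{\theta}^{-1}$ is a bounded operator; the monotonicity and $v(\theta)$ claims are, by comparison, a one-line interlacing argument and a routine frequency-domain computation.
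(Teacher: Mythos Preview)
Your proof follows essentially the same route as the paper's: the principal-submatrix/interlacing argument for the monotonicity of $\rho_N(\theta)$, the factorization $Q_{\z,N}=A_NQ_{\y,N}A_N^T$ (your $T_N\Sigma_NT_N^T$) together with the coercivity bound $\lambda_{\min}(Q_{\y,N})\ge\mu_*$ for the strictly positive lower bound, and the Parseval identity applied to $\hat H_\theta$ for $v(\theta)>p\mu_*$. The only substantive difference is in how the filter matrix is handled in step~(ii): the paper writes down $A_N^{-1}$ explicitly (with entries $\tilde M_k=\hat C\hat A^{k-1}\hat K$) and argues from $\|A_N^{-1}\|_2>1$, whereas you bound $\|(I+L_N)^{-1}\|\le\gamma_\theta=\|W_\theta\|_\infty$ via the Toeplitz-operator norm to obtain $\rho_N(\theta)\ge c\,\gamma_\theta^{-2}$. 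Your version of this step is cleaner, but it yields a $\theta$-dependent constant $\mu_*=\tfrac12 c\,\gamma_\theta^{-2}$; since Lemmas~\ref{upperBound:lem2}--\ref{upperBound:lem3} use a single $\mu_*$ inside $E_{\theta\sim\pi}[\cdot]$, to carry your constant through to those statements you would need a uniform bound on $\gamma_\theta$ over $\Theta$ (e.g.\ compactness of $\Theta$ and continuity of $\theta\mapsto\|W_\theta\|_\infty$).
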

Lemma \ref{upperBound:lem1} says that 
$\rho_{N}(\theta)$ is bounded from below. 
\begin{lemma}
\label{upperBound:lem2}
$$\ln \underset{\theta \sim\pi}{E}[e^{\lambda (v(\theta)-p \rho_0)}] \le  \lim_{N \rightarrow \infty} u_{\ell,\pi}(\lambda,N) \le \ln \underset{\theta \sim\pi}{E}[e^{\lambda (v(\theta)-p\mu_*)}],$$
where 
$\mu_*$ is as in Lemma \ref{upperBound:lem1}, and 
$\rho_0$ is the smallest eigenvalue of
$\bE[\y(t)\y^{T}(t)]$
\end{lemma}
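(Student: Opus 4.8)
The plan is to compute $\lim_{N\to\infty}u_{\ell,\pi}(\lambda,N)$ in closed form and then sandwich it using the bounds $\mu_*<\rho^*(\theta)\le\rho_0$ on the limiting quantity $\rho^*(\theta)=\lim_{N\to\infty}\rho_N(\theta)$ from Lemma~\ref{upperBound:lem1} (together with the new estimate $\rho^*(\theta)\le\rho_0$). First I would rewrite $u_{\ell,\pi}(\lambda,N)=\ln\underset{\theta\sim\pi}{E}\big[e^{\lambda v(\theta)}\,(1+\tfrac{2\lambda\rho_N(\theta)}{N})^{-pN/2}\big]$. Since $\tfrac{pN}{2}\ln(1+\tfrac{2\lambda\rho_N(\theta)}{N})\to p\lambda\rho^*(\theta)$ because $\rho_N(\theta)\to\rho^*(\theta)$ (Lemma~\ref{upperBound:lem1}), the integrand converges pointwise in $\theta$ to $e^{\lambda(v(\theta)-p\rho^*(\theta))}$. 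To move the limit through the $\pi$-integral I would use that $\rho_N(\theta)$ is decreasing with $\rho_N(\theta)\ge\rho^*(\theta)>\mu_*>0$, so that $0<(1+\tfrac{2\lambda\rho_N(\theta)}{N})^{-pN/2}\le1$ and the integrand is dominated by $e^{\lambda v(\theta)}$, which is $\pi$-integrable (this is exactly the finiteness that makes $u_{\ell,\pi}$ well defined). Dominated convergence then gives $\lim_{N\to\infty}u_{\ell,\pi}(\lambda,N)=\ln\underset{\theta\sim\pi}{E}\big[e^{\lambda(v(\theta)-p\rho^*(\theta))}\big]$.

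It then remains to sandwich $\rho^*(\theta)$. The lower bound $\rho^*(\theta)>\mu_*$ is Lemma~\ref{upperBound:lem1}. For the upper bound $\rho^*(\theta)\le\rho_0$, I would observe that the infinite-past prediction error $\e_{f_\theta}(t):=\y(t)-\hat{\y}_{f_{\theta}}(t)$ is the output of the stable linear filter with transfer function $W_\theta(z)=I-\hat C(zI-\hat A+\hat K\hat C)^{-1}\hat K$ applied to $\y$, and that $W_\theta(z)^{-1}=I+\hat C(zI-\hat A)^{-1}\hat K$ is causal, stable and monic ($W_\theta(\infty)^{-1}=I$). Hence $\Phi_{\y}(e^{i\omega})=W_\theta^{-1}(e^{i\omega})\,\Phi_{\e_{f_\theta}}(e^{i\omega})\,\big(W_\theta^{-1}(e^{i\omega})\big)^{*}$, and expanding $W_\theta^{-1}(z)=\sum_{k\ge0}G_kz^{-k}$ with $G_0=I$ one has $\tfrac{1}{2\pi}\int_{-\pi}^{\pi}W_\theta^{-1}(e^{i\omega})\big(W_\theta^{-1}(e^{i\omega})\big)^{*}d\omega=\sum_{k\ge0}G_kG_k^T\succeq I$, so $\bE[\y(t)\y^T(t)]=\tfrac{1}{2\pi}\int_{-\pi}^{\pi}\Phi_\y(e^{i\omega})\,d\omega\succeq\big(\min_{\omega}\lambda_{\min}(\Phi_{\e_{f_\theta}}(e^{i\omega}))\big)I$, i.e.\ $\rho_0\ge\min_{\omega}\lambda_{\min}(\Phi_{\e_{f_\theta}}(e^{i\omega}))$. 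Conversely, for a fixed window length $m$ and a block $[\z_j(f_\theta);\dots;\z_{j+m-1}(f_\theta)]$ taken far from the initialization time $0$, the mean-square convergence $\hat{\y}_{f_{\theta}}(t\mid0)\to\hat{\y}_{f_{\theta}}(t)$ (Lemma~\ref{l:ihp}, combined with stationarity) makes its covariance converge as $j\to\infty$ to the block-Toeplitz matrix $T_m(\Phi_{\e_{f_\theta}})$; since this block is a principal submatrix of $\bE[\z_{1:N}(f_\theta)\z_{1:N}^T(f_\theta)]$ whenever $N\ge j+m-1$, we get $\rho^*(\theta)\le\lambda_{\min}(T_m(\Phi_{\e_{f_\theta}}))$, and letting $m\to\infty$ and using the standard monotone limit $\lambda_{\min}(T_m(\Phi))\downarrow\min_\omega\lambda_{\min}(\Phi(e^{i\omega}))$ we conclude $\rho^*(\theta)\le\min_{\omega}\lambda_{\min}(\Phi_{\e_{f_\theta}}(e^{i\omega}))\le\rho_0$.

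Putting this together, for $\pi$-a.e.\ $\theta$ we have $v(\theta)-p\rho_0\le v(\theta)-p\rho^*(\theta)\le v(\theta)-p\mu_*$; since $\lambda>0$, applying $t\mapsto e^{\lambda t}$, then $\underset{\theta\sim\pi}{E}$, then $\ln$ — all monotone — and invoking the limit computed above yields the two inequalities of the lemma. The step I expect to be the main obstacle is the upper bound on $\rho^*(\theta)$, specifically relating $\rho^*(\theta)=\lim_N\rho_N(\theta)$ to $\min_\omega\lambda_{\min}(\Phi_{\e_{f_\theta}})$: because the filter \eqref{filt:lti:fin:theta} is started at time $0$ with zero state, $\z_{1:N}(f_\theta)$ is only \emph{asymptotically} stationary, so Szeg\H{o}'s theorem cannot be applied to its covariance directly, and the argument has to be routed through windows far from the initialization (where Lemma~\ref{l:ihp} forces closeness to the stationary process $\e_{f_\theta}$) together with the block-Toeplitz monotone limit. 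A minor further point is that the whole statement tacitly presupposes $\underset{\theta\sim\pi}{E}\,e^{\lambda v(\theta)}<\infty$, which is also what is needed for $u_{\ell,\pi}(\lambda,N)$ itself to be finite.
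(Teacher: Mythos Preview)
Your argument is correct but takes a considerably more elaborate route than the paper's, especially for the upper bound. The paper does not first compute the limit and then sandwich $\rho^*(\theta)$; it bounds $\rho_N(\theta)$ uniformly in $N$ \emph{before} passing to the limit. Since Lemma~\ref{upperBound:lem1} already records that $\rho_N(\theta)$ is decreasing in $N$ with $\rho_N(\theta)>\mu_*$, the paper simply uses $\mu_*\le\rho_N(\theta)\le\rho_1(\theta)$ for every $N$ and identifies the first term $\rho_1(\theta)$ with $\rho_0$. These two constants bracket $\rho_N(\theta)$ for all $N$, so the denominator $(1+2\lambda\rho_N(\theta)/N)^{pN/2}$ is sandwiched between $(1+2\lambda\mu_*/N)^{pN/2}$ and $(1+2\lambda\rho_0/N)^{pN/2}$; one dominated-convergence step on each side (with the same dominating function $e^{\lambda v(\theta)}$ you invoke) together with the elementary limit $(1+2\lambda c/N)^{pN/2}\to e^{\lambda pc}$ finishes the proof.

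Your spectral-factorisation and block-Toeplitz argument for $\rho^*(\theta)\le\rho_0$ is valid and in fact proves the sharper intermediate chain $\rho^*(\theta)\le\min_\omega\lambda_{\min}\Phi_{\e_{f_\theta}}(e^{i\omega})\le\rho_0$, which carries more information than the paper extracts. But for the lemma as stated this is substantial machinery for what the paper treats as a one-line consequence of the monotonicity of $\rho_N$ already established in Lemma~\ref{upperBound:lem1}: the limit-then-sandwich order you chose forces you to control $\rho^*(\theta)$ directly, whereas the paper's sandwich-then-limit order lets the initial term of the monotone sequence do all the work.
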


Lemma \ref{upperBound:lem2} together with Lemma 
\ref{Psi:convergence} indicates that the upper bound of
Theorem \ref{thm:lti} might be quite conservative:
while $\Psi_{\ell,\pi}(\lambda,N)$ converges to zero for
a suitable strictly increasing sequence of  integers $N_k$,
the upper bound 
$u_{\ell,\pi}(\lambda,N)$ is bounded from below in the limit. 
Note that $\y(t)$ is an i.i.d process with variance $\sigma I_p$, then $\rho_0=\sigma=\mu_*$ and the upper and lower bounds of Lemma \ref{upperBound:lem2} coincide. 

The problem of finding a tight PAC-Bayesian upper bound 
for LTI systems seems to be a challenging problem. In fact,
the first PAC-Bayesian upper bound for linear regression
\cite{nips-16} with i.i.d. data had the same drawback, 
namely, the part of the upper bound on the term which corresponds to
$\Psi_{\ell,\pi}(\lambda,N)$ did not converge to zero
as $N \rightarrow \infty$. This drawback for eliminated in
\cite{shalaeva2019improved} for linear regression with i.i.d data, but not for linear regression with non i.i.d. data. 
In the light of this development, the conservativity of
our error bound is not surprising. 

\paragraph*{Upper bound not involving $N$ and expectation over the prior}
Finally, we can formulate a non-asymptotic upper bound similar to that of Lemma \ref{upperBound:lem2}
\begin{lemma}
\label{upperBound:lem3}
\[ 
\begin{split}
& u_{\ell,\pi}(\lambda,N) \le \ln \underset{\theta \sim\pi}{E}[e^{\lambda (v(\theta)-\frac{p\mu_* pN*0.5}{\lambda p\mu_* + 0.5pN})}]  \le \\
& \le \underset{\theta \sim\pi}{E}[e^{\lambda (v(\theta)}] - \frac{\lambda p\mu_* 0.5}{\lambda p\mu_* + 0.5}
\end{split}
\]
where 
$\mu_*$ is as in Lemma \ref{upperBound:lem1}
\end{lemma}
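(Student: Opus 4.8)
The plan is to start from the explicit formula
\[
u_{\ell,\pi}(\lambda,N)=\ln\underset{\theta\sim\pi}{E}\left[\frac{\exp(\lambda v(\theta))}{\left(1+\tfrac{2\lambda\rho_N(\theta)}{N}\right)^{pN/2}}\right]
\]
and to replace the denominator inside the expectation by a smaller exponential, so that the integrand takes the form $\exp\!\big(\lambda v(\theta)-(\text{positive term})\big)$. The elementary inequality I would use for this is $\ln(1+x)\ge\frac{x}{1+x}$, valid for all $x>-1$ (a one-line calculus check: the difference vanishes at $x=0$ and is monotone on each side of $0$). Applying it with $x=\frac{2\lambda\rho_N(\theta)}{N}>0$ and raising both sides to the power $\frac{pN}{2}$ gives $\big(1+\tfrac{2\lambda\rho_N(\theta)}{N}\big)^{pN/2}\ge\exp\!\big(\tfrac{p\lambda\rho_N(\theta)N}{N+2\lambda\rho_N(\theta)}\big)$, hence
\[
u_{\ell,\pi}(\lambda,N)\le\ln\underset{\theta\sim\pi}{E}\left[\exp\!\left(\lambda v(\theta)-\frac{p\lambda\rho_N(\theta)N}{N+2\lambda\rho_N(\theta)}\right)\right].
\]

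Next I would remove the $\theta$-dependence of $\rho_N(\theta)$ from the subtracted term. The map $r\mapsto\frac{p\lambda rN}{N+2\lambda r}$ is strictly increasing on $r>0$ (its derivative equals $\frac{p\lambda N^{2}}{(N+2\lambda r)^{2}}>0$), and by Lemma~\ref{upperBound:lem1} we have $\rho_N(\theta)\ge\rho^{*}(\theta)>\mu_{*}>0$ uniformly in $\theta$ and $N$. Therefore $\frac{p\lambda\rho_N(\theta)N}{N+2\lambda\rho_N(\theta)}\ge\frac{p\lambda\mu_{*}N}{N+2\lambda\mu_{*}}$, and substituting this lower bound into the exponent (which only enlarges the integrand) produces, modulo the elementary algebraic rewriting $\lambda\cdot\frac{p\mu_{*}\cdot pN\cdot0.5}{\lambda p\mu_{*}+0.5pN}=\frac{p\lambda\mu_{*}N}{N+2\lambda\mu_{*}}$, the first displayed inequality of the statement.

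For the second inequality I would observe that the subtracted term is now a constant with respect to $\theta$, so it factors out of the expectation and out of the logarithm: $\ln\underset{\theta\sim\pi}{E}\big[\exp(\lambda v(\theta)-\tfrac{p\lambda\mu_{*}N}{N+2\lambda\mu_{*}})\big]=\ln\underset{\theta\sim\pi}{E}\big[e^{\lambda v(\theta)}\big]-\frac{p\lambda\mu_{*}N}{N+2\lambda\mu_{*}}$. Finally, $N\mapsto\frac{p\lambda\mu_{*}N}{N+2\lambda\mu_{*}}$ is again strictly increasing, so for $N\ge1$ it is bounded below by its value at $N=1$; since subtracting a smaller number only makes the bound larger, this yields the second inequality of the statement (again up to rewriting the constant in the stated form).

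There is no deep obstacle here: the lemma is a chain of elementary estimates. The two points that need care are (i) keeping the directions of the monotonicity arguments consistent --- at each step we \emph{lower}-bound the subtracted term, which is what makes the overall expression an \emph{upper} bound, first by replacing $\rho_N(\theta)$ with $\mu_{*}$ and then by replacing $N$ with $1$ --- and (ii) making sure the lower bound $\rho_N(\theta)>\mu_{*}$ from Lemma~\ref{upperBound:lem1} is genuinely uniform in both $\theta$ and $N$, since the substitution in the second paragraph relies on exactly this; that uniform bound is itself a consequence of the coercivity of $\y$ through $\Phi_{\y}(e^{i\theta})>\mu_{*}I$. The remainder is routine algebraic bookkeeping to match the precise notation of the statement.
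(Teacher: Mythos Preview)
Your proof is correct and follows essentially the same route as the paper: the paper invokes the inequality $(1+x/y)^y>\exp\!\big(\tfrac{xy}{x+y}\big)$ with $x=p\lambda\rho_N(\theta)$, $y=\tfrac{pN}{2}$, which is exactly your $\ln(1+t)\ge\tfrac{t}{1+t}$ after the substitution $t=x/y$ and multiplication by $y$, and then uses monotonicity of $\tfrac{xy}{x+y}$ in $y$ for the second step. If anything, your write-up is more complete, since you make explicit the intermediate replacement $\rho_N(\theta)\ge\mu_*$ (via monotonicity in $r$) that the paper's proof leaves implicit; the only caveat is that the constant in the final line of the statement corresponds to $y=\tfrac12$ rather than to $N=1$, so ``value at $N=1$'' is not literally a rewriting but a further weakening (still valid since $\tfrac{pN}{2}\ge\tfrac12$).
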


The result of Lemma \ref{upperBound:lem2} -- \ref{upperBound:lem3} could be used
to derive further PAC-Bayesian error bounds. In fact, 
\[
   \lim_{N \rightarrow \infty} \ln \underset{\theta \sim\pi}{E}[e^{\lambda (v(\theta)-p\mu_*)}]=
    \ln \underset{\theta \sim\pi}{E}[e^{\lambda (v(\theta))}]
    - p \mu_{*},
\]
so any upper bound on $\ln \underset{\theta \sim\pi}{E}[e^{\lambda (v(\theta))}]$
yields an upper bound on the limit of $u_{\ell,\pi}(\lambda,N)$.
Alternatively, we can also use the non-asymptotic 
bound from Lemma \ref{upperBound:lem3}.
In particular, if 
\[ K_1 < v(\theta) < K_2 \]
for every $\theta$, then
\begin{eqnarray}
    & K_1 - p \rho_0 \le \lim_{N \rightarrow \infty} u_{\ell,\pi}(\lambda,N)  \le K_2 -p \mu_{*} 
    \label{disc:bound:eq1}
    \\
    &  u_{\ell,\pi}(\lambda,N)  \le K_2 -\frac{0.5p \mu_{*}}{p\mu_*+0.5} 
\end{eqnarray}

The latter bounds do not depend on 
the prior, but only the bounds $K_1,K_2$ and 
the constants $\rho_0 > \mu_*$, which, in turn,
represent some mild assumptions on the distribution of $\y$.
Furthermore, it is easy to see from the discussion
in Section \ref{sec:Psi} that 
$v(\theta)$ is a continuous function of $\theta$, hence,
for a compact parameter set $\Theta$ the
bounds $K_1,K_2$ will exist and they could be estimated based
on the parametrization. That is, 
\eqref{disc:bound:eq1} could readily be applied in this case.

In turn, \eqref{disc:bound:eq1} together with Theorem \ref{thm:lti} and \eqref{T:pac} yields bounds for the 
average generalization error of the posterior for a 
certain choice of $N$ and $\lambda$.

% $\lim_{N \rightarrow \infty} \hat{\mathcal{L}}_{\y,N}(f)=\mathcal{L}_{\y}(f)$, where the limit is understood in the mean square sense. 
%\end{lemma}
\paragraph{Using the alternative definition of empirical loss involving infinite past filters}
Recall from Remark \ref{rem:emploss} the alternative definition
of empirical loss, which uses the infinite past
prediction $\hyf(t)$, instead of the finite past prediction
$\hyf(t \mid 0)$. We could have stated Theorem \ref{thm:general-alquier} with $V_{\y,N}(f_{\theta})$ instead  of $\hat{\mathcal{L}}_{\y,N}(f_{\theta})$  and we could have stated Theorem \ref{thm:lti} too, with the difference that
$\z_{1:N}(f_{\theta})=\begin{bmatrix}(\y(1)-\hat{\y}_{f_{\theta}}(1)) \\ \vdots \\ (\y(N)-\hat{\y}_{f_{\theta}}(N)) \end{bmatrix}$.
In particular, with these changes in mind the discussion above
still holds, but the proofs simplify a lot. 
In particular, $E[\z_{1:N}(f_{\theta})\z_{1:N}(f_{\theta})^T]$
has the Toepliz-matrix structure, as $\y(t)-\hat{f}_{f_{\theta}}(t)$ is a wide-sense stationary process.

\paragraph{PAC-Bayesian error bounds and system identification}
As it was pointed out in Subsection \ref{sect:pac:learning:gen}, the derived error bound together with the Gibbs posterior distribution 
\eqref{eq:gibbs} leads to learning algorithms.
In particular, choosing a model $f$ which maximizes the likelihood of the Gibbs posterior amounts to minimizing 
the cost function \eqref{eq:gibbs4}. In particular, for 
$\lambda=1$ and $\pi$ being the uniform distribution, 
the cost function \eqref{eq:gibbs4} becomes the empirical loss.
The algorithm which calculates the model minimizing 
\eqref{eq:gibbs4} then corresponds to prediction error minimization \cite{LjungBook}. Note that in the theoretical analysis of the prediction error approach, usually the prediction error \eqref{eq:EmpLoss1} using the infinite past is used. As it was pointed out in Remark \ref{rem:emploss}, for large enough $N$, the latter prediction error is close to the empirical loss with a high probability. 
In fact, in the practical implementation of prediction error method usually the empirical error \eqref{eq:EmpLoss} is used. 
The various other choices of $\lambda$ and the prior $\pi$
when minimizing \eqref{eq:gibbs4} amount to minimizing the prediction error with a regularization term. Other methods for sampling from the posterior will lead to different system identification algorithms. Investigating the relationship between the thus arising algorithms and the existing system identification algorithms remains a topic of future research. 
% the In ptarticular, if $o\theta=\theta_0$, i.e.,
%$\theta_0=(A_0,K_0,C_0)$ is the generator of the 
%process $\y$, then $\y(t)-\hat{f}_{f_{\theta_0}}(t)$ is the
%innovation process $\e(t)$ of $\e(t)$, and hence
%$E[\z_{1:N}(f_{\theta_0})\z_{1:N}(f_{\theta_0})^T]$
%is block diagonal. In fact, if the variance of the innovation process $\e(t)$ is $\sigma^2 I_p$, then
%in Lemma \ref{upperBound:lem1} $\rho_{*}(\theta)=\sigma$ can be taken for $\theta=\theta_0$. 

\subsection{Computing $u_{\ell,\pi}(\lambda,N)$}
\label{sec:Psi}
The value of $u_{\ell,\pi}(\lambda,N)$ can be computed using sequential Monte-Carlo \cite{MonteCarloMethods}. In details, sample a set of parameters $\Theta=\{\theta_i\}_{i=1}^{N_f}$, according to the prior distribution $\theta_i\sim\pi(\theta)$. Then for large enough $N_f$ the right-hand side of \eqref{eq:newPsiBound} can be approximated by
\begin{align}
    \ln \left ( \frac{1}{N_f}\sum_{i=1}^{N_f} \frac{\exp{(\lambda v(\theta_i))}}{\left (1+\frac{\lambda\rho^*(\theta_i)}{\frac{N}{2}} \right)^{\frac{N}{2}}} \right )
\end{align}
For every $\theta_i$ drawn from the prior distribution $\pi(\theta)$, the quantities $v(\theta_i)$ and $\rho^{*}(\theta_i)$ can be calculated as follows. 
Given the system in forward innovation form $(A_0,K_0,C_0)$ and a predictor $f_{\theta_i}$ with 
$$
\Sigma(\theta_i)
=(\hat{A},\hat{K},\hat{C})
=(\hat{A}(\theta_i),\hat{K}(\theta_i),\hat{C}(\theta_i)).
$$ 
Define a system $(A_e,K_e,C_e,I,\e)$, whose output is the error $\z(t)=\y(t)-\hat{\y}_{f_{\theta_i}}(t \mid 0)$,

\begin{equation}
\begin{split}
\underset{\tilde{\x}(t+1)}{\underbrace{\begin{bmatrix} \x(t+1) \\ \hat{\x}(t+1) \end{bmatrix}}}&= \underset{A_e}{\underbrace{\begin{bmatrix} A_0 & 0 \\ \hat{K}C_0 & \hat{A} - \hat{K}\hat{C}  \end{bmatrix}}}\underset{\tilde{\x}(t)}{\underbrace{\begin{bmatrix*}[l] \x(t) \\ \hat{\x}(t) \end{bmatrix*}}}+\underset{K_e}{\underbrace{\begin{bmatrix*}[l] K_0 \\ \hat{K} \end{bmatrix*}}}\e(t), \label{eq:ErrSysSt}\\
\z(t) &= \underset{C_e}{\underbrace{\begin{bmatrix} C_0 & -\hat{C} \end{bmatrix}}}\begin{bmatrix*}[l] \x(t) \\ \hat{\x}(t) \end{bmatrix*} + \e(t),\\
\begin{bmatrix} \x(0) \\ \hat{\x}(0) \end{bmatrix}&=\begin{bmatrix} \sum_{k=1}^{\infty} A_0^{k-1}K_0\e(-k) \\ 0 \end{bmatrix},\\
P_{0}&=\begin{bmatrix}P_x & 0\\0 &0_{n\times n} \end{bmatrix},
\end{split}
\end{equation}
where $P_x=E[\x(0)\x(0)^T]$ is the steady-state covariance, which satisfies the Lyapunov equation
\begin{equation}
    P_x=A_0P_xA_0^T+K_0Q_eK_0^T,
\end{equation}
with $Q_e=E[\e(t)\e(t)^T]$. In practice, the numerical value of the initial condition $\x(0)$ is chosen based on the particular application. Then $v(\theta)$ corresponds to the second moment of the random variable $\z(t)$
\begin{align}
    v(\theta)=E\left [\sum\z_i^2 \right ]=trace(E[\z(t)\z(t)^T]=\nonumber \\
    =trace(C_eP_eC_e^T+Q_e). 
\end{align}
The covariance matrix $P_e$ is the solution to Lyapunov equation
\begin{equation}
    P_e=A_eP_eA_e^T+K_eQ_eK_e^T.
\end{equation}
Now, define the joint covariance matrix $Q_{z,N}$ of the random variable $\z_{1:N} = [\z(1),\dots, \z(N)]^T$ as
\begin{equation}
    Q_{\z,N} = E[\z_{1:N}\z_{1:N}^T],\label{eq:Qzn}
\end{equation}
i.e., the $ij$th $p\!\times\!p$ block matrix element $(Q_{\z,N})_{ij}$ of $Q_{\z,N}$ is $E[\z(i)\z(j)^T]$.
Expanding \eqref{eq:Qzn} with
{\footnotesize
\begin{equation}
     \z_{1:N}= \begin{bmatrix} C_e\\C_eA_e\\ \vdots \\ C_eA_e^N \end{bmatrix} \tilde{x}_e(0) + \begin{bmatrix}I&0&0&0\\C_eA_e^0K_e&I&0&0\\ \vdots& & \ddots&0\\C_eA_e^{N-1}K_e& \dots & C_eA_e^0K_e & I \end{bmatrix} \begin{bmatrix} \e(0)\\ \vdots \\ \e(N) \end{bmatrix},
\end{equation}
}
yields
% \begin{align}
%     E \left [z_{1:n}z_{1:N}^T  \right] &= SE[x_0x_0^T]S^T+VQ_e^NV^T\\
%     Q_e^N&=\begin{bmatrix} Q_e & 0 & 0 \\ 0& \ddots & 0\\ 0 & 0 & Q_e \end{bmatrix}.
% \end{align}
\begin{align}
(Q_{\z,N})_{ij}=\begin{cases}
C_eP_{i-1}C_e^T+Q_e & \text{ if } i=j \\ 
(C_eP_{i-1}A_e^T+Q_eK_e^T)(A_e^{j-i-1})^TC_e^T & \text{ if } i<j \\ 
C_e(A_e^{i-j-1})(A_eP_{j-1}C_e^T+K_eQ_e) & \text{ if } i>j 
\end{cases}
\end{align}
with
\begin{align}
    P_{i+1}=A_eP_{i}A_e^T+K_eQ_eK_e^T.
\end{align}
The state covariances $P_i$, are computed under the assumption that the stochastic process $\y(t)$ has non-zero state and covariance at $t=0$, encoding the infinite past.
% The predictor $f_{\theta_i}$ is initialised with known initial state $\hat{\x}(0)=0$.

% \subsection{Finding forward innovation representation}
% The forward innovation representation of the System \ref{eq:ExSys} can be found according to \cite{LindquistBook} as follows.\\
% % \begin{equation}
% %      \Lambda_0^y=E[y(t)y(t)']=C\Lambda_0^xC^T+D\Lambda_0^vD^T
% % \end{equation}
% Define $Q_x=E[x(t)x(t)^T]$, as the solution to an Lyapunov equation
% \begin{equation}
%     Q_x\! =\! AQ_x A^T+BQ_vB^T,
% \end{equation}
% with $Q_v=E[v(t)v(t)^T]$.\\
% The covariance $G=E[y(t)x(t+1)^T]$ satisfies
% \begin{equation}
%     G=CQ_xA^T+DQ_vB^T.
% \end{equation}
% Let $P_x$ be the minimal symmetric solution of the algebraic Riccati equation
% \begin{align}
% 	P_x= &AP_xA^T+\\
% 	+&(G^T- AP_x C^T)(\Delta(P_x))^{-1}(G^T - AP_x C^T)^T
% \end{align}
% , with $\Delta(P_x)=(Q_y-CP_x C^T)$.
% Set the Kalman gain $K$ as
% \begin{align}
%     K&=(G^T-A\Sigma_{\x} C^T)(\Delta(\Sigma_{\x})^{-1})
% \end{align}
% Finally the system in forward innovation representation is $(A,K,C,I,\e)$. 
\section{Illustrative Example}
\label{sect:num}
Consider a stochastic process $\y(t)$, defined by the LTI system
\begin{equation}
    \begin{split}
        \x(t+1)&=\begin{bmatrix}0.5 & 1\\0 & 0.5 \end{bmatrix}\x(t)+\begin{bmatrix} 1 & 0\\0 & 1 \end{bmatrix}\bv(t)\\
        \y(t)&=\begin{bmatrix} 1 & 0 \end{bmatrix}\x(t)+\begin{bmatrix} 1 & 0 \end{bmatrix}\bv(t)\\
        \bv(t)&\sim \mathcal{N}(0,1) \label{eq:ExSys}
    \end{split}
\end{equation}
Given a realisation of $\y(t)$ of $N=100$ samples, we wish to estimate a parameter $\theta$ of the system
\begin{equation}
    \begin{split}
        \hat{\x}(t+1)&=\begin{bmatrix}0.5 & 1\\0 & 0.5+\theta \end{bmatrix}\hat{\x}(t)+\begin{bmatrix} 1 & 0\\0 & 1 \end{bmatrix}\bv(t)\\
        \hat{\y}(t)&=\begin{bmatrix} 1 & 0 \end{bmatrix}\hat{\x}(t)+\begin{bmatrix} 1 & 0 \end{bmatrix}\bv(t)\\
        \bv(t)&\sim \mathcal{N}(0,1) \label{eq:ExPredictor}
    \end{split}
\end{equation}
\textbf{Remark} For illustration purposes, $\theta$ represents the deviation from System \eqref{eq:ExSys}, such that $\theta=0$ would yield the smallest generalisation error. \\

A forward innovation form of a known LTI system in the form \eqref{LTIn} can be found by solving algebraic Riccati equations, described in \cite[Section~6.9]{LindquistBook}. Yielding a predictor
\begin{equation}
    \begin{split}
        \x(t+1)&=\underset{A_0}{\underbrace{\begin{bmatrix}0.5 & 1\\0 & 0.5 \end{bmatrix}}}\x(t)+\underset{K_0}{\underbrace{\begin{bmatrix} 1\\0.12 \end{bmatrix}}}e(t)\\
        \y(t)&=\underset{C_0}{\underbrace{\begin{bmatrix} 1 & 0 \end{bmatrix}}}\x(t)+\begin{bmatrix} 1 \end{bmatrix}e(t) \label{eq:ExFIFSys}
    \end{split}
\end{equation}

From System description \eqref{eq:ExFIFSys} in forward innovation form, and parameterisation described in the example description \eqref{eq:ExPredictor}, the hypothesis set can be defined as
\begin{align}
    \begin{split}
    \mathcal{F}=\left \{f_\theta=(A(\theta),K_0,C_0) \mid A(\theta)=A_0+\begin{bmatrix}0&0\\0& \theta \end{bmatrix},\right .\\
    \theta\in\reals,~\rho(A(\theta))<1,~\rho(A(\theta)-K_0C_0)<1 \label{eq:HypothesisSet}\\
    det(\begin{bmatrix} K_0 & A(\theta)K_0\end{bmatrix})\neq 0,det\left(\begin{bmatrix}C_0\\ C_0A(\theta) \end{bmatrix}\right)\neq 0\left. \right \}
    \end{split}
\end{align}
where $\rho(\cdot)$ denotes the spectral radius.

Equipped with the hypothesis set \eqref{eq:HypothesisSet}, a realisation of stochastic process $S=\{\y(t)(\omega)\}_{t=0}^{100}$ of $N=100$ samples of System \eqref{eq:ExSys}, and a prior uniform distribution over the hypothesis set as $\pi\sim\mathcal{U}(-1.5,0.5)$.
The posterior distribution $\hat{\rho}(\theta)$ and the upperbound on the expectation of generalised loss $E_{f\sim \hat{\rho}} \mathcal{L}^{\ell}_{\y} (f)$, can be computed via sequential Monte-Carlo.

First sample a set of parameters
\begin{align}
    \Theta=\{\theta_i\}_{i=1}^{N_f},\quad &\theta_i\sim\pi
\end{align}
and construct a predictor $f_{\theta_i},\: \forall \theta_i\in\Theta$  with $$\Sigma(\theta_i)=(\hat{A},\hat{K},\hat{C})=(A(\theta_i),K_0,C_0)$$
For each $f_{\theta_i}$ compute empirical predictions by iterating
\begin{equation}
\begin{split}
    &\hat{x}(t+1)=(\hat{A}-\hat{K}\hat{C})\hat{x}(t) + \hat{K}\y(t)(\omega),\\
    &\hat{y}(t)=\hat{C}x(t),\\
    &\hat{x}(0)=0.
\end{split}
\end{equation}
From empirical predictions compute empirical loss
\begin{equation}
    \hat{\mathcal{L}}^\ell_{\y,N}(f_{\theta_i})=\frac{1}{N}\sum_{t=1}^{N} (\y(t)(\omega)-\hat{y}(t))^T(\y(t)(\omega)-\hat{y}(t)). \forall i\label{eq:empiricalLossEx}
\end{equation}
Figure \ref{fig:empLoss} shows empirical loss over the parameter set $\Theta$. Note that $\theta=0$ does not yield the smallest empirical loss, since we are using a finite number of samples.
\begin{figure}
\centering
\includegraphics[width=3in]{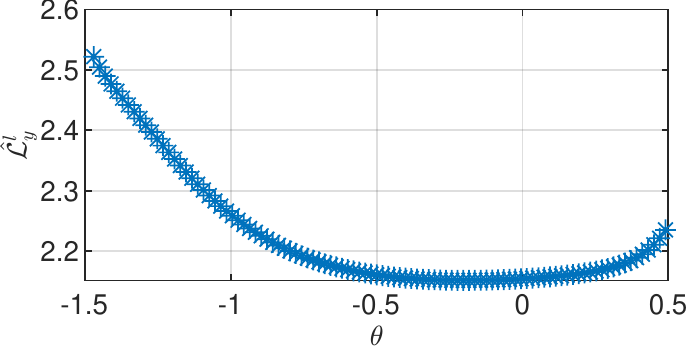} 
\caption{Empirical Loss over parameter set $\theta$}
\label{fig:empLoss}
\end{figure}
From \eqref{eq:empiricalLossEx} and prior distribution $\pi$, the Gibbs posterior distribution $\hat{\rho}$ \eqref{eq:gibbs} can be approximated, and is shown in Figure \ref{fig:gibbs}. In this example, the true system is known and therefore $\lambda$ could be chosen to maximise the likelihood of $\theta=0$. However, in an application, one would need to consider $\lambda$ carefully. 

\begin{figure}
\centering
\includegraphics[width=0.75\linewidth]{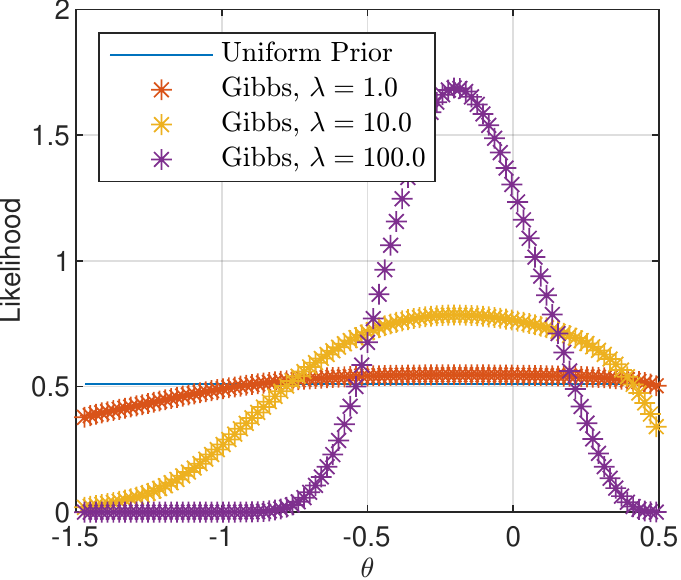} 
\caption{Various Gibbs distributions with varying $\lambda$}
\label{fig:gibbs}
\end{figure}
Now, the computed posterior distribution can be used to calculate the expectation over empirical losses. Normally, one would have to repeat the sequential Monte-Carlo process, while sampling $\theta\sim\hat{\rho}$, and take the sample-mean as an approximation of the expectation. In this simplified example, the expectation can be approximated by approximating an integral
\begin{equation}
    E_{f\sim \hat{\rho}}\hat{\mathcal{L}}^\ell_{\y,N}(f_{\theta})\approx\sum_{i=1}^{N_f-1}\hat{\mathcal{L}}^\ell_{\y,N}(f_{\theta_i})\hat{\rho}(\theta_i)(\theta_{i+1}-\theta_i)
\end{equation}
The KL divergence can similarly be approximated as
\begin{equation}
    KL(\hat{\rho}||\pi)\approx \sum_{i=1}^{N_f-1}\ln \left (\frac{\hat{\rho}(\theta_i)}{\pi(\theta_i)}\right )\hat{\rho}(\theta_i)(\theta_{i+1}-\theta_i)
\end{equation}
Lastly $u_{\ell,\pi}(\lambda,N)$ can be computed according to Section \ref{sec:Psi}. \\
Now we have everything necessary to compute the PAC-bayesian upper bound, the right hand side of \eqref{T:pac1}. For $\lambda=100$, the upper-bound ends up being $4.48$, while $E_{f\sim\hat{\rho}}\mathcal{L}^\ell_\y\approx2.03$. Figure \ref{fig:upperBoundContour} showcases the upper bound for various number of training samples $N$ and real values $\lambda>0$. Recall, that in Gibbs distribution \eqref{eq:gibbs}, $\lambda$ plays a role of weighting the importance of data versus the importance of the prior distribution, larger $\lambda$ will minimise $E_{f\sim\rho}\hat{\mathcal{L}}_\y^\ell$, however it will increase $KL(\rho\|\pi)$. Therefore, it is natural that there exists $\lambda$ that compromises empirical data fit, with prior assumptions. For this specific example, realisation and prior, $\lambda=5$ yields the smallest upper bound for all $N$.

\begin{figure}[h]
\centering
\includegraphics[width=3in]{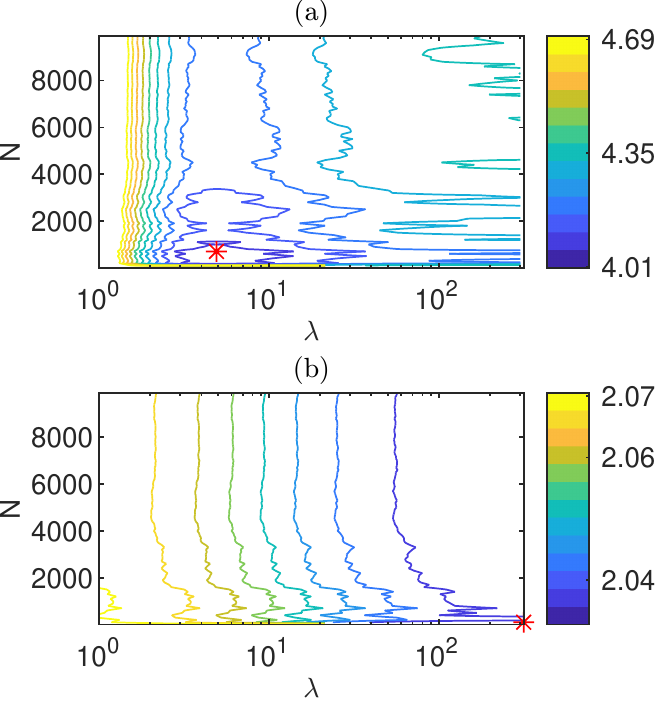} 
\caption{ (a) shows the PAC-bayesian upper bound (right hand side of \eqref{T:pac1}) (b) shows the expectation of the generalisation error over posterior distribution, which depends on $(N,\lambda)$, the minimal value over all $N$ and $\lambda$ is denoted by {\color{red}$*$}}
\label{fig:upperBoundContour}
\end{figure}

% \input{req}
% \input{originalTex}

%\newpage
%\input{originalTex}

\section{Conclusions and future work}
\label{sect:concl}
In this paper we have formulated a non-trivial error PAC-Bayesian error bound for autonomous stochastic LTI models. This error bounds gives a non-asymptotic error bound on the generalization error.  Future work will be directed towards 
extending this bound first to stochastic LTI models with inputs, and then to more complex models such as hybrid models and recurrent neural network. Another research direction would be to understand the effect of the choice of prior and the constant $\lambda$ on the resulting learning algorithms and comparing it to various existing learning algorithms.

	\bibliographystyle{plain}
% \bibliography{NonIIDbib,BPB-refs}
\bibliography{bib}

\begin{thebibliography}{10}

\bibitem{alquier2013prediction}
P.~Alquier, X~Li, and O.~Wintenberger.
\newblock Prediction of time series by statistical learning: general losses and
  fast rates.
\newblock {\em Dependence Modeling}, 1(2013):65--93, 2013.

\bibitem{alquier-15}
P.~Alquier, J.~Ridgway, and N.~Chopin.
\newblock On the properties of variational approximations of {G}ibbs
  posteriors.
\newblock {\em JMLR}, 17(239):1--41, 2016.

\bibitem{ambroladze-06}
A.~Ambroladze, E.~Parrado-Hern{\'a}ndez, and J.~Shawe-Taylor.
\newblock Tighter {PAC-B}ayes bounds.
\newblock In {\em NIPS}, 2006.

\bibitem{MonteCarloMethods}
A.~Barbu and S.~C. Zhu.
\newblock {\em Monte Carlo Methods}.
\newblock Springer Singapore, Singapore, 1st edition, 2020.

\bibitem{CainesBook}
P.~E. Caines.
\newblock {\em Linear Stochastic Systems}.
\newblock John Wiley and Sons, 1988.

\bibitem{Dziugaite2017}
G.~K. Dziugaite and D.~M. Roy.
\newblock Computing nonvacuous generalization bounds for deep (stochastic)
  neural networks with many more parameters than training data.
\newblock In {\em {UAI}}. {AUAI} Press, 2017.

\bibitem{nips-16}
P.~Germain, F.~Bach, A.~Lacoste, and S.~Lacoste-Julien.
\newblock Pac-bayesian theory meets bayesian inference.
\newblock In {\em {NIPS}}, pages 1876--1884, 2016.

\bibitem{grunwald-2012}
P.~Gr{\"{u}}nwald.
\newblock The safe {B}ayesian - learning the learning rate via the mixability
  gap.
\newblock In {\em {ALT}}, 2012.

\bibitem{guedj2019primer}
B.~Guedj.
\newblock {A Primer on PAC-Bayesian Learning}.
\newblock {\em arXiv preprint arXiv:1901.05353}, 2019.

\bibitem{hannan}
E.~J. Hannan and M.~Deistler.
\newblock {\em The statistical theory of linear systems}.
\newblock SIAM, 2012.

\bibitem{Katayama:05}
T.~Katayama.
\newblock {\em Subspace Methods for System Identification}.
\newblock Springer-Verlag, 2005.

\bibitem{KlenkeAchim2020PTAC}
A.~Klenke.
\newblock {\em Probability Theory: A Comprehensive Course}.
\newblock Springer International Publishing AG, Cham, 2020.

\bibitem{LindquistBook}
A.~Lindquist and G.~Picci.
\newblock {\em Linear Stochastic Systems: A Geometric Approach to Modeling,
  Estimation and Identification}.
\newblock Springer, 2015.

\bibitem{LjungBook}
L.~Ljung.
\newblock {\em System Identification: Theory for the user (2nd Ed.)}.
\newblock PTR Prentice Hall., Upper Saddle River, USA, 1999.

\bibitem{mcallester-99}
D.~McAllester.
\newblock Some {PAC}-{B}ayesian theorems.
\newblock {\em Machine Learning}, 37(3):355--363, 1999.

\bibitem{mcallester-03b}
D.~McAllester.
\newblock Simplified {PAC-B}ayesian margin bounds.
\newblock In {\em COLT}, pages 203--215, 2003.

\bibitem{shalaeva2019improved}
V.~Shalaeva, A.~Fakhrizadeh Esfahani, P.~Germain, and M.~Petreczky.
\newblock Improved {PAC-}bayesian bounds for linear regression.
\newblock {\em Proceedings of the AAAI Conference on Artificial Intelligence},
  34:5660--5667, Apr. 2020.

\bibitem{shalev2014understanding}
S.~Shalev-Shwartz and S.~Ben-David.
\newblock {\em Understanding machine learning: From theory to algorithms}.
\newblock Cambridge university press, 2014.

\bibitem{ShethK17}
R.~Sheth and R.~Khardon.
\newblock Excess risk bounds for the bayes risk using variational inference in
  latent gaussian models.
\newblock In {\em {NIPS}}, pages 5151--5161, 2017.

\bibitem{zhang-06}
T.~Zhang.
\newblock Information-theoretic upper and lower bounds for statistical
  estimation.
\newblock {\em {IEEE} Trans. Information Theory}, 52(4):1307--1321, 2006.

\end{thebibliography}

\appendix 
\section{Proofs}
\begin{proof}[Proof Lemma \ref{lem:rem:emploss}]
 First we show that $\lim_{t \rightarrow \infty} (\hyf(t\mid 0)-\hyf(t))=0$ in the mean square sense.
 Indeed, for any $\tau \in \mathbb{Z}$
 $\lim_{s \rightarrow -\infty} (\hyf(\tau \mid s)-\hyf(\tau))=0$
 in the mean square sense. 
 Notice that 
 $(\hyf(0 \mid s)-\hyf(0))=\sum_{k=1}^{\infty} M_{k-s}\y(s-k)$,
 where the infinite sum is absolutely convergent in the mean square sense. 
 Define the sequence $b_N=\bE[\|\sum_{k=1}^{\infty} M_{k+N} \y(-N-k)\|_2^2]$. Since $\bE[\|(\hyf(0 \mid s)-\hyf(0))\|_2^2]=b_{-s}$, 
 if follows that the sequence $b_{N}$
 converges to zero as $N$ converges to $\infty$.
 Since $\y$ is stationary, it follows that
 for any $N \ge 0$, any $\tau \in \mathbb{Z}$
$b_N=\bE[\|\sum_{k=1}^{\infty} M_{k+N} \y(-N-k)\|_2^2]=\bE[\|\sum_{k=1}^{\infty} M_{k+N} \y(\tau-k-N)\|_2^2]$.
In particular, by choosing $\tau=N$, it follows that
$b_N=\bE[\|\sum_{k=1}^{\infty} M_{k+N} \y(-k)\|_2^2]$.
 Finally, notice that 
 $\hyf(t\mid 0)-\hyf(t)=\sum_{k=1}^{N} M_{k+t} y(-k)$, and hence
 $\bE=[\|\hyf(t\mid 0)-\hyf(t)\|^2]=b_{t}$. Since 
 $\lim_{t \rightarrow \infty} b_t=0$, it follows that 
 $\lim_{t \rightarrow \infty} (\hyf(t\mid 0)-\hyf(t))=0$ in the mean square sense, as claimed.
 Since $\lim_{t \rightarrow \infty} (\hyf(t\mid 0)-\hyf(t))=0$ in the mean square sense, 
 $\lim_{t \rightarrow \infty} a_t=0$, where  $a_t=\bE[|\ell(\hyf(t\mid 0),\y(t))-\ell(\hyf(t),\y(t))|])$.
 It is easy to see that 
 $\hat{\mathcal{L}}_{\y,N}(f)-V_{\y,N}(f)=\frac{1}{N} \sum_{t=1}^{N} \left(\ell(\hyf(t\mid 0),\y(t))-\ell(\hyf(t),\y(t)) \right)$, and hence
 $\bE[|\hat{\mathcal{L}}_{\y,N}(f)-V_{\y,N}(f)|] \le \frac{1}{N} \sum_{t=1}^{N} a_t$. 
 Since $a_t$ converges to $0$ as $t \rightarrow \infty$, by the well-known property of
 Cesaro-means $\lim_{N \rightarrow \infty}\frac{1}{N} \sum_{t=1}^{N} a_t=0$. 
 \end{proof}
\begin{proof}[Proof of Lemma \ref{pac:lemma:min}]
 Note that if $\Sigma=(\hat{A},\hat{K},\hat{C})$ is a minimal tuple, then
$(\hat{A}-\hat{K}\hat{C},\hat{K},\hat{C})$ is a minimal determinstic LTI, since controllability of $(\hat{A},\hat{K})$ implies that
$(\hat{A}-\hat{K}\hat{C},\hat{K})$ is controllable. 
Similarly, if $\Sigma^{'}$ is a minimal tuple, then 
$(\hat{A}^{'}-\hat{K}^{'}\hat{C}^{'},\hat{K}^{'},\hat{C}^{'})$ is a minimal dimensional  deterministic LTI system. 
Notice that $f_{\Sigma}$ and $f_{\Sigma^{'}}$ are the input-output maps of
the deterministic LTI systems $S_1=(\hat{A}-\hat{K}\hat{C},\hat{K},\hat{C})$
and $S_2=(\hat{A}^{'}-\hat{K}^{'}\hat{C}^{'},\hat{K}^{'},\hat{C}^{'})$
respectively. Hence, from standard realization theory  it follows that
as both $S_1$ and $S_2$ are minimal, $f_{\Sigma}=f_{\Sigma^{'}}$
implies that there exists a nonsingular matrix $T$ such that
$\hat{A}^{'}-\hat{K}^{'}\hat{C}^{'}=T(\hat{A}-\hat{K}\hat{C})T^{-1}$, $\hat{K}^{'}=T\hat{K}$, and 
 $\hat{C}^{'}=\hat{C}T^{-1}$. From this the second statement of
 the lemma follows.

 In order to prove the first part, consider the the deterministic
 LTI system $S_1=(\hat{A}-\hat{K}\hat{C},\hat{K},\hat{C})$ and apply to it
 the standard minimization algorithm. Let the matrices of the resulting
 system be $S_2=(\hat{F}_m,\hat{K}_m, \hat{C}_m)$ and define 
 $\hat{A}_m=\hat{F}_m+\hat{K}_m\hat{C}_m$. Notice that 
 $\hat{F}_m$ is stable, since $\hat{A}-\hat{K}\hat{C}$ was stable and
 minimization preserves stability. It is not difficult to see that
 minimization preserves stability of $\hat{A}$ too, i.e., 
 $\hat{A}_m$ will be stable too. Hence, the tuple $\Sigma_m=(\hat{A}_m,\hat{K}_m,\hat{C}_m)$ is a minimal tuple. 
 Hence, since the input-output maps
 of $S_1$ and $S_2$ coincide, and the input-output map of the former
 is $f_{\Sigma}$ and of the later is $f_{\Sigma_m}$, and
 hence $f_{\Sigma}=f_{\Sigma_m}$.
\end{proof}
\begin{proof}[Proof of Lemma \ref{pac:lemma:lti2}]
 Indeed, assume that  $\Sigma(\theta_0)=(A_0,K_0,C_0)$ is a minimal
tuple such that $(A_0,K_0,C_0,I,\e)$ is a minimal realization of $\y$.
It is not difficult to see that in this case, 
$\hat{\y}_{f_{\Sigma_0}}(t)=E_l[\y(t) \mid \mathcal{H}^{\y}_{t-}]$, 
i.e., $\hat{\y}_{f_{\Sigma_0}}(t)$ is the orthogonal projection of $\y(t)$ onto its own past.
Hence, $\e(t)=\hat{\y}_{f_{\Sigma_0}}(t)-\y(t)$, $\ell(\hat{\y}_{f_{\Sigma_0}}(t),\y(t))=\|\hat{\y}_{f_{\Sigma_0}}(t)-\y(t)\|_2^2$, and therefore
\begin{equation}
\label{pac:lti2}
   E[\ell(\hat{\y}_{f_{\Sigma_0}}(t),\y(t))]=
   E[[\|\e(t)\|^2_2]
\end{equation}
It is easy to see that for any other element of $f \in \mathcal{F}$, 
the entries of $\hat{\y}_{f}(t \mid s)$ and $\hat{\y}_{f}(t)$ belong to the closed subspace $\mathcal{H}^{\y}_{t-}$ generated by the past of $\y$.
By the properties of orthogonal projections, it then follows that
\begin{equation}
\label{pac:lti3}
\begin{split}
& \mathcal{L}^{\ell}(f) = E[\ell(\hat{\y}_{f}(t),\y(t))] = \\
& =E[\|\hat{\y}_{f}(t)-\y(t)\|^2_2] \le E[[\|\e(t)\|^2_2]= \\
& =E[\ell(\hat{\y}_{f_{\Sigma(\theta_0)}}(t),\y(t))]=\mathcal{L}^{\ell}(f_{\Sigma(\theta_0)}))
\end{split}
\end{equation}
and the equality holds only if $\hat{\y}_{f}(t)=\hat{\y}_{f_{\Sigma(\theta_0)}}(t)$. The latter
means that $\y(t)-\hyf(t)=\e(t)$. This means that if 
$f=f_{\Sigma}$ for a minimal tuple $\Sigma=(\hat{A},\hat{K},\hat{C})$, then
by using \eqref{filt:lti:fin:theta} and substituting 
$\y(t)-\hat{C}\x_{\Sigma}(t)=\y(t)-\hyf(t)=\e(t)$ into the first equation of \eqref{filt:lti:fin:theta}, it follows that
\begin{align*}
    & x_{\Sigma}(t+1)=\hat{A}\x_{\Sigma}(t)+\hat{K}\e(t) \\
    & \y(t)=\hat{C}\x_{\Sigma}(t)+(\y(t)-\hat{C}\x_{\Sigma}(t))=\hat{C}\x_{\Sigma}(t) + \e(t).
\end{align*}
That is, $(\hat{A},\hat{K},\hat{C},I,\e)$ is a realization of $\y$ in forward innovation form. Since $(\hat{A},\hat{K})$ is controllable and $(\hat{C},\hat{A})$ is observable, $(\hat{A},\hat{K},\hat{C},I,\e)$
is a minimal realization of $\y$ in forward innovation form.
Hence, it is isomorphic to $(A_0,K_0,C_0,I,\e)$, that is,
there exists a nonsingular matrix $T$, such that $\hat{A}=TA_0T^{-1}$,
$\hat{K}=TK_0$ and $\hat{C}=C_0T^{-1}$, i.e., 
$\Sigma$ and $\Sigma(\theta_0)$ are isomorphic. 
Since $\Sigma=\Sigma(\theta)$ for some $\theta \in \Theta$, by Assumption \ref{pac:lti:assum}, $\theta=\theta_0$. 
\end{proof}
\begin{proof}[Proof of Lemma \ref{Psi:convergence}]
Under Assumption \ref{ass1},
by \cite[Theorem 2.3]{LjungBook} the alternative empirical loss converges to the generalization error, i.e.
$\lim_{N \rightarrow \infty} V_{\y,N}(f)=\mathcal{L}_{\y}(f)$
with probability $1$. 
Recall that by Lemma \ref{lem:rem:emploss}
$\lim_{N \rightarrow \infty} V_{\y,N}(f)-\hat{\mathcal{L}}_{\y,N}(f)=0$ in the mean sense,
hence there exists a subsequence $N_k$, $\lim_{k \rightarrow \infty} N_k=\infty$ such that
$\lim_{k \rightarrow \infty} V_{\y,N_k}(f)-\hat{\mathcal{L}}_{\y,N_k}(f)=0$ with probability $1$.
Hence, $\lim_{k \rightarrow \infty} \hat{\mathcal{L}}_{\y,N_k}(f)=\mathcal{L}_{\y}(f)$ with probability $1$.  Using dominated convergence theorem the
statement of the lemma follows.
\end{proof}
\begin{proof}[Proof of Lemma \ref{upperBound:lem1}]
Consider the matrix 
$Q_{\z,N}=\bE \left [\z_{1:N}(f_\theta)\z_{1:N}^T(f_\theta) \right ]$.
It is easy to see that $Q_{\z,N}$ is the left-upper block of the matrix
$Q_{\z,N+1}$. In particular, if $v \ne 0$ is an eigenvector of
$Q_{\z,N}$ which corresponds to eigenvalue $\rho_N(\theta)$, then
$\hat{v}=(v^T,0^T)$, $\hat{v}^T Q_{\z,N+1} \hat{v}= v^TQ_Nv = \rho_{N}(\theta) v^Tv > \rho_{N+1}(\theta) v^Tv$, where
we used the fact $\rho_{N+1}(\theta) w^Tw \le w^T Q_{N+1} w$,
due to the well-known property of smallest eigenvalue of a 
symmetric semi-definite matrix. Since $v \ne 0$, it follows
that $v^Tv > 0$, and hence 
$\rho_{N}(\theta) > \rho_{N+1}(\theta)$.
Since $\rho_N(\theta)$ is decreasing, it follows that
$\lim_{N \rightarrow \infty} \rho_N(\theta)=\rho^{*}(\theta)=\inf_N \rho_{N}(\theta)$.
From \cite[Section 2.2, eq. (2.1)]{CainesBook} it follows
$Q_{\z,N}=A_N Q_{\y,N} A_N^T$, where
$(Q_{\y,N})_{i,j}=E[\y(i)\y^T(j)]$ and 
$A_N=\begin{bmatrix} I & 0 & 0 & \cdots & 0 \\ -M_1 & I & 0 \cdots & 0 \\ \vdots &  \vdots & \vdots & \vdots & \vdots \\  -M_{N-1} & M_{N-2} & -M_{N-3} & \cdots & I \end{bmatrix}$,
where $M_i=C(A-KC)^{i-1}K$, $\theta=(A,K,C)$.
In particular, for any $v \ne 0$,
$v^TQ_{\z,N}v \ge \mu_{N} \|A_Nv\|^2 \ge \mu_N \|A_N^{-1}\|_2 v^Tv$,
where $\mu_N$ is the minimal eigenvalue of $Q_{\y,N}$.
Note that $A_N$ is trivially invertable, and
$A^{-1}_N=\begin{bmatrix} I & 0 & 0 & \cdots & 0 \\ \tilde{M}_1 & I & 0 \cdots & 0 \\ \vdots &  \vdots & \vdots & \vdots & \vdots \\  \tilde{M}_{N-1} & \tilde{M}_{N-2} & \tilde{M}_{N-3} & \cdots & I \end{bmatrix}$, where $\tilde{M}_i=CA^{i-1}K$.
It then follows that $\|A_N^{-1}\|_2 > 1$ and hence
$v^T Q_{\z,N}v > \mu_N v^Tv$, and therefore
$\rho_N(\theta) > \mu_N$. 
Finally, since $\y$ is coercive, by \cite[proof of Theorem 7.3]{Katayama:05}, it follows that
there exists $\mu_{*} > 0$ such that
the spectral density $\Phi_{\y}$ of $\y$
satisfies  $\Phi_{y}(e^{i\theta}) > \mu_* I$ for all $\theta \in [-\pi,\pi]$, and $\mu_N > \mu_* > 0$.
Finally,
notice that 
$v(\theta)=\bE[\|\y(t)-\hat{y}_{f{\theta}}(t)\|_2^2]=\mathrm{trace}\bE[(\y(t)-\hat{y}_{f{\theta}}(t))(\y(t) - \hat{y}_f{{\theta}}(t))^T]$, and
$\y(t)-\hat{\y}_{f_{\theta}}(t)$ is the result of
applying the linear system
$(A-KC,K,-C,I)$ to $\theta$, where $\theta=(A,K,C)$.
In particular, if $H(z)=I-C(zI-A-KC)^{-1}K$ is
the transfer function of that system, it follows 
by \cite[Theorem 1.2]{CainesBook} that
$H(z)\Phi_{\y}(z)H^{*}(z)$ is the spectral density of
$\y(t)-\hat{\y}_{f_{\theta}}(t)$, and 
$\bE[(\y(t)-\hat{y}_{f_{\theta}}(t))(\y(t) -\hat{y}_{f_{\theta}}(t))^T]=\frac{1}{2\pi} \int_{-\pi}^{\pi} H(e^{i\theta})\Phi_{\y}(e^{i\theta})H^{*}(e^{i\theta})d\theta$,
hence 
$v(\theta)=\frac{1}{2\pi} \int_{-\pi}^{\pi} \mathrm{trace} H(e^{i\theta})\Phi_{\y}(e^{i\theta})H^{*}(e^{i\theta})d\theta = %\frac{1}{\2pi} \int_{-\pi}^{\pi} \sum_{r=1}^{p} e_r^{*} H(e^{i\theta})\Phi_{\y}(e^{i\theta})H(e^{i\theta}) d\theta 
\ge \mu_{*} \frac{1}{2 \pi} \int_{-\pi}^{\pi} \mathrm{trace} H(e^{i\theta})H^{*}(e^{i\theta})d\theta \ge p\mu_*$.
Here we used the fact that by Parseval's equality
$\frac{1}{2 \pi} \int_{-\pi}^{\pi} \mathrm{trace} H(e^{i\theta})H^{*}(e^{i\theta})d\theta=\|I_p\|^2_F+\sum_{k=0}^{\infty} \|C(A-KC)^{k}K\|^2_F > p \mu$, as
$H(e^{i\theta})=I_{p}+\sum_{k=0}^{\infty} C(A-KC)^kK e^{i k\theta}$ and $\|.\|_F$ denotes the Frobenius norm. 
\end{proof}
\begin{proof}[Proof of Lemma \ref{upperBound:lem2}]
  For every $\theta \in \Theta$, $(1+\frac{\lambda \rho_N(\theta)}{N/2})^{\frac{Np}{2}} \ge (1+\frac{\lambda \mu_*}{N/2})^{\frac{Np}{2}})$, and 
   $(1+\frac{\lambda \rho_N(\theta)}{N/2})^{\frac{Np}{2}} \le (1+\frac{\lambda \rho_0}{N/2})^{\frac{Np}{2}}$
    where we used the fact that 
    $\rho_0=\rho_1(\theta)$ and that by Lemma \ref{upperBound:lem1} $\rho_{N}(\theta)$ is decreasing.
  Hence, 
 $\frac{\exp{(\lambda v(\theta))}}{\left (1+\frac{\lambda\rho_N(\theta)}{\frac{N}{2}} \right)^{\frac{Np}{2}}}  \le \frac{\exp{(\lambda v(\theta))}}{\left (1+\frac{\lambda\mu_*}{\frac{N}{2}} \right)^{\frac{pN}{2}}}$, and
 $\frac{\exp{(\lambda v(\theta))}}{\left (1+\frac{\lambda\rho_N(\theta)}{\frac{N}{2}} \right)^{\frac{Np}{2}}}  \ge \frac{\exp{(\lambda v(\theta))}}{\left (1+\frac{\lambda\rho_0}{\frac{N}{2}} \right)^{\frac{pN}{2}}}$
  Notice that $\lim_{N \rightarrow \infty} (1+\frac{\lambda\mu_*}{\frac{N}{2}})^{\frac{pN}{2}}=\lim_{N \rightarrow \infty} (1+\frac{\lambda p\mu_*}{\frac{Np}{2}})^{\frac{pN}{2}}
  e^{\lambda p\mu_*}$
  and $\lim_{N \rightarrow \infty} (1+\frac{\lambda\rho_0}{\frac{N}{2}})^{\frac{pN}{2}}=\lim_{N \rightarrow \infty} (1+\frac{\lambda p\rho_0 }{\frac{Np}{2}})^{\frac{pN}{2}}
  e^{\lambda p\rho_0}$
  and $\frac{\exp{(\lambda v(\theta))}}{\left (1+\frac{\lambda p x}{\frac{pN}{2}} \right)^{\frac{pN}{2}}} \le \exp{(\lambda v(\theta))}$ for $x \in \{\rho_0, \mu_*, \rho_{N}(\theta) \mid N \ge 1\}$. The statement of the lemma follows now using
  dominated convergence theorem. 
\end{proof}
\begin{proof}[Proof of Lemma \ref{upperBound:lem3}]
The first equation follows by using the inequality
\[ \left(1 + \frac{x}{y} \right)^y > \exp\left(\frac{xy}{x+y} \right) \text{ for } x, y > 0
\]
with $x=p\lambda\rho_{N}(\theta)$ and $y=\frac{pN}{2}$,
the second one by noticing that $\frac{xy}{x+y}$ is monotonically increasing in $y$, hence,
$\frac{p\mu_* pN*0.5}{\lambda p\mu_* + 0.5pN} > \frac{p\mu_* 0.5}{\lambda p\mu_* + 0.5}$.
\end{proof}
\end{document}